\documentclass{article} 

\usepackage{preprint,times}
\iclrpreprintcopy

\usepackage{amsmath,amsfonts,bm}

\def\eqref#1{equation~\ref{#1}}

\def\1{\bm{1}}

\DeclareMathAlphabet{\mathsfit}{\encodingdefault}{\sfdefault}{m}{sl}
\SetMathAlphabet{\mathsfit}{bold}{\encodingdefault}{\sfdefault}{bx}{n}

\usepackage[utf8]{inputenc} 
\usepackage[T1]{fontenc}    
\usepackage{hyperref}       
\usepackage{url}            
\usepackage{booktabs}       
\usepackage{amsfonts}       
\usepackage{nicefrac}       
\usepackage{microtype}      
\usepackage{hyperref} 

\usepackage{booktabs}
\usepackage{adjustbox}
\usepackage{pdflscape}
\usepackage{rotating}

\usepackage{amsmath}
\usepackage{amssymb}
\usepackage{graphicx}
\usepackage{wrapfig}
\usepackage{sidecap}
\usepackage{algorithm2e}
\usepackage{multirow}
\RestyleAlgo{ruled}
\LinesNumbered
\DontPrintSemicolon
\usepackage[english]{babel}

\usepackage{float}
\usepackage{enumitem}
\usepackage{titlesec}
\usepackage{setspace}
\usepackage{enumitem}
\usepackage[table]{xcolor}
\usepackage{booktabs}
\setlist[itemize]{leftmargin=*}

\usepackage{amsthm}
\newtheorem{proposition}{Proposition}

\definecolor{first}{HTML}{D9EAD3}   
\definecolor{second}{HTML}{D9EAF7}  
\definecolor{third}{HTML}{FFF2CC}   
\newcommand{\first}[1]{\cellcolor{first}\textbf{#1}}
\newcommand{\second}[1]{\cellcolor{second}#1}
\newcommand{\third}[1]{\cellcolor{third}#1}

\title{Title}

\author{
\textbf{Quanling Zhao}$^1$ \quad
\textbf{Jiaying Yang}$^1$ \quad
\textbf{Ye Tian}$^1$ \quad
\textbf{Josh Victoria}$^1$ \quad
\textbf{Zhijun Wang}$^1$ \\
\textbf{Pietro Mercati}$^2$ \quad
\textbf{Onat Gungor}$^1$ \quad
\textbf{Tajana Rosing}$^1$ \\
$^1$Computer Science and Engineering, University of California San Diego \\
$^2$Intel Corporation \\
\texttt{\{quzhao,jiy018,yet002,jvictoria,zhw106,ogungor,tajana\}@ucsd.edu} \\
\texttt{pietro.mercati@intel.com}
}

\begin{document}

\title{RGLD: \underline{\textbf{R}}andomized \underline{\textbf{G}}lobal-\underline{\textbf{L}}ocal \underline{\textbf{D}}ensity Estimation for Tabular Anomaly Detection}

\maketitle

\begin{abstract}
Unsupervised tabular anomaly detection requires methods that are accurate, robust across heterogeneous datasets, and computationally efficient. Classical statistical detectors are often efficient, but they usually rely on a fixed data view and a single notion of abnormality. Deep anomaly detectors can learn more flexible scoring functions, but they are substantially slower and difficult to tune in unsupervised settings due to the lack of a reliable supervisory signal. We propose \textbf{RGLD}, a randomized global-local density estimator for efficient unsupervised tabular anomaly detection. RGLD combines a global random-feature density branch, which identifies samples in broadly low-density regions, with a local neighbor branch, which detects samples that are weakly supported by nearby observations. Both branches operate over feature-bagged randomized views, allowing RGLD to expose anomaly evidence that may be hidden in any single representation. We conduct experiments on 47 tabular datasets against 23 statistical and deep anomaly detection baselines under fully unsupervised setting. RGLD achieves the strongest dataset-level AUROC performance, ranking 1st in dataset wins, and ranks 2nd in AUPRC wins. RGLD is also faster than all evaluated deep detectors, achieving 50$\times$–580$\times$ speedups, and remains competitive with statistical methods in runtime, yielding a favorable accuracy-efficiency tradeoff.
\end{abstract}

\section{Introduction}

Unsupervised anomaly detection is a fundamental problem in machine learning with wide-ranging applications~\citep{han2022adbench,jiang2023adgym,gungor2024robust,chen2025pyod,chatterjee2022iot}. Given a dataset dominated by normal samples, the goal is to assign high anomaly scores to rare, irregular, or structurally inconsistent observations, without access to labels. As datasets continue to grow and anomaly detection is increasingly deployed in time-sensitive applications, efficiency and scalability become central requirements.

A long-standing principle in anomaly detection is that anomalies are associated with low-density or weakly supported regions of the data distribution~\citep{chandola2009anomaly,ruff2021unifying}. This idea motivates many statistical methods, including nearest-neighbor detectors~\citep{ramaswamy2000efficient}, local density estimators~\citep{breunig2000lof}, kernel methods~\citep{scholkopf2001estimating}, isolation-based methods~\citep{liu2008isolation}, and histogram-based approaches~\citep{goldstein2012histogram}. These methods are attractive because they are often simple, interpretable, and applicable without supervision. However, many statistical detectors are limited by two fixed choices: how abnormality is defined and where it is measured. First, each method typically adopts a particular notion of abnormality. For example, density-based methods treat anomalies as points in low-density regions, while nearest-neighbor methods treat anomalies as points with weak local support. These criteria are complementary: a point may be globally rare but not locally isolated, or locally isolated without being globally extreme. Second, a given notion of abnormality is often evaluated in a fixed view of the data, such as the full feature space, a chosen distance metric, and a scale. This can fail when the anomaly signal is visible only in a subset of features or under a different projection. In such cases, irrelevant or high-variance features can dilute the anomaly signal, making a point appear well supported in the full space even though it is clearly isolated in an informative subspace~\citep{aggarwal2001outlier,kriegel2009outlier}. Thus, robust tabular anomaly detection requires both diverse notions of abnormality and diverse views in which those notions are evaluated.

Deep anomaly detectors offer another route by learning nonlinear representations or anomaly scoring functions~\citep{ruff2018deep,zong2018deep,pang2021deep}. This flexibility can be valuable when anomalies are not well captured by a rigid notion of abnormality. However, in fully unsupervised settings, the same flexibility also creates a practical challenge as there is no reliable supervisory signal for training or choosing objectives. As a result, deep detectors often rely on proxy learning tasks whose effectiveness can vary substantially across datasets. They also introduce additional training cost and optimization variability, and benchmark studies have found that they can be orders of magnitude slower than lightweight statistical methods~\citep{han2022adbench,jiang2023adgym}.

This motivates a different question: can we obtain diverse multi-view anomaly signals without the computational burden of deep anomaly detectors? We argue that for tabular anomaly detection, a promising path is to revisit classical density principles through randomized computation. Instead of relying on a single view or a single notion of abnormality, we can generate many inexpensive randomized views of the data and aggregate their anomaly signals. Each view provides a partial estimate of abnormality, but the diversity across views and anomaly mechanisms can expose different anomalies and reduce dependence on any single notion of abnormality.

Built around this idea, we propose \textbf{RGLD: \underline{R}andomized \underline{G}lobal-\underline{L}ocal \underline{D}ensity Estimation for Tabular Anomaly Detection}, an efficient and scalable detector for fully unsupervised tabular anomaly detection. RGLD combines two complementary scoring branches. The global branch uses random-feature density estimation to identify samples in broadly low-density regions, capturing global rarity while avoiding expensive pairwise density computation. The local neighbor branch measures neighborhood support in randomized projected spaces, capturing locally isolated samples that may not be globally extreme. RGLD further ensembles these scores over feature-bagged randomized views, revealing anomalies hidden in the full feature space. Finally, rank-based aggregation combines the resulting heterogeneous scores into a stable anomaly ranking. This design yields a simple, parallelizable, and highly efficient unsupervised detector for settings where both detection accuracy and runtime matter.

\vspace{-2mm}
\begin{itemize}[itemsep=2pt]
    \item We propose \textbf{RGLD}, a fully unsupervised framework that turns classical density and neighborhood principles into scalable randomized estimators for tabular anomaly detection.
    \item We provide an analysis showing how feature-bagged views can strengthen sparse anomaly contrast, while random-feature density estimation and sampled-reference neighbor scoring preserve useful support signals without conventional quadratic pairwise costs.
    \item Finally, we evaluate RGLD on 47 ADBench datasets against 23 statistical and deep baselines under fully unsupervised setting. RGLD ranks 1st in AUROC wins and 2nd in AUPRC wins across datasets, while running 50$\times$--580$\times$ faster than evaluated deep anomaly detectors, yielding a favorable accuracy-efficiency tradeoff.
    
\end{itemize}
\vspace{-2mm}
More broadly, RGLD estimates two complementary density-derived signals (global KDE-like support and local neighborhood support) across feature-bagged randomized views, showing that simple statistical principles can remain highly competitive when redesigned around randomized computation.

\section{Related Work}

\textbf{Statistical and Randomized Anomaly Detection: } Statistical detectors define anomaly scores from explicit assumptions about distance, density, support, isolation, or distributional tails. Distance- and neighborhood-based methods such as $k$NN outlier detection~\citep{ramaswamy2000efficient}, LOF~\citep{breunig2000lof}, COF~\citep{tang2002enhancing}, and subspace outlier scoring~\citep{kriegel2009outlier} identify samples with weak local support or abnormal behavior in selected subspaces. Other methods rely on cluster structure, as in CBLOF~\citep{he2003discovering}, one-class support estimation, as in one-class SVM~\citep{scholkopf2001estimating}, or low-dimensional projection and reconstruction, as in PCA-based detection~\citep{shyu2003novel}. Efficient distributional detectors such as HBOS~\citep{goldstein2012histogram}, COPOD~\citep{li2020copod}, and ECOD~\citep{li2022ecod} score deviations from marginal, copula-based, or empirical feature distributions. Randomized and ensemble methods reduce dependence on a single detector or data view: Feature Bagging~\citep{lazarevic2005feature} ensembles detectors over feature subsets, Isolation Forest~\citep{liu2008isolation} uses randomized partitions, LODA~\citep{pevny2016loda} aggregates random-projection histograms, and ADERH~\citep{durani2026anomaly} builds an ensemble from randomized hypersphere pairs. Zimek et al.~\citep{zimek2014ensembles} provide a broader discussion of ensemble methods for unsupervised outlier detection. These methods demonstrate the strength of lightweight statistical scoring and randomization. However, most emphasize a primary scoring principle, such as local density, isolation, marginal tail behavior, or projection-based histograms. RGLD instead explicitly combines global density support and local neighborhood support, and evaluates both across feature-bagged randomized views and multiple density scales.

\textbf{Deep Unsupervised Anomaly Detection: } Deep anomaly detectors replace fixed statistical scores with learned representations or proxy objectives. Deep SVDD~\citep{ruff2018deep} learns a representation that concentrates training samples around a hypersphere center, while REPEN~\citep{pang2018learning} learns representations for random distance-based outlier scoring. Reconstruction- and density-based approaches include DAGMM~\citep{zong2018deep}, which combines an autoencoder with a Gaussian mixture density estimator, and RCA~\citep{liu2021rca}, which uses a collaborative autoencoder for anomaly detection. Other methods construct self-supervised objectives for tabular or general data: RDP~\citep{wang2019unsupervised} predicts random distances, GOAD~\citep{bergman2020classification} performs transformation classification, NeuTraL~\citep{qiu2021neural} learns neural transformations, ICL~\citep{shenkar2022anomaly} uses internal contrastive learning, and SLAD~\citep{xu2023fascinating} constructs scale-based supervisory signals. DIF~\citep{xu2023deep} further combines deep representations with isolation-based scoring. Although these methods can learn flexible anomaly scores, fully unsupervised tabular anomaly detection provides limited guidance for selecting architectures, proxy tasks, and optimization objectives, and introduces substantial computational cost. RGLD instead seeks flexible anomaly evidence through randomized statistical computation: it combines different anomaly signals and views without requiring a deep learned representation.

\section{Why Randomized Global-Local Density Estimation?}
\label{sec:motivation}
\begin{wrapfigure}{r}{0.5\textwidth}
    \centering
    \vspace{-5mm}
    \includegraphics[width=0.95\linewidth]{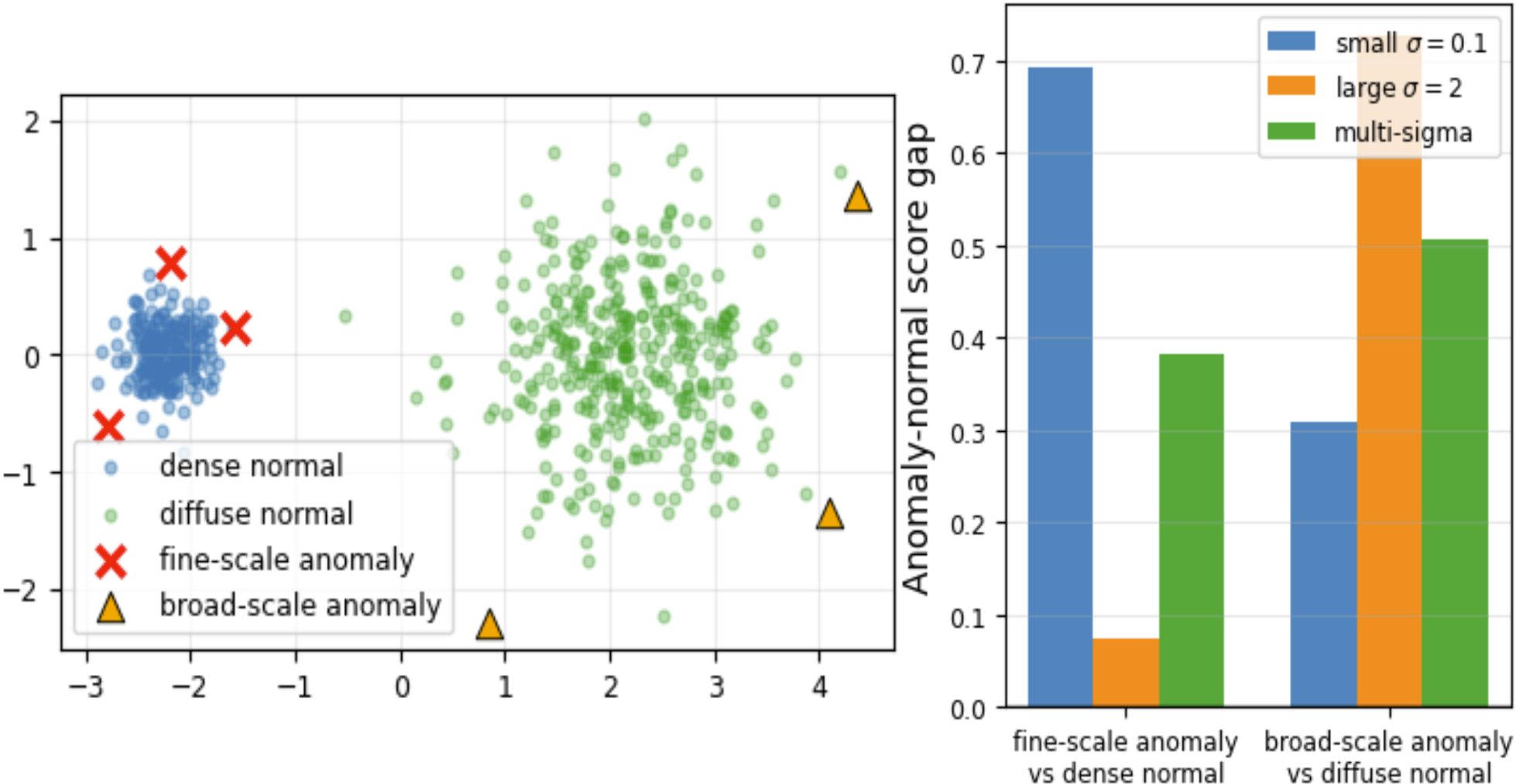}
    \vspace{-4mm}
    \caption{
    \textbf{Unknown anomaly scale.} \textbf{Left:} Fine-scale anomalies occur near a dense cluster, while broad-scale anomalies occur around a diffuse cluster. \textbf{Right:} Small and large bandwidths favor different anomaly scales, while multi-sigma scoring provides a more balanced signal when the correct scale is unknown.}
    \label{fig:motivation_scale}
\end{wrapfigure}
We first use simple synthetic examples to motivate why RGLD is built around randomized global-local density estimation. In fully unsupervised anomaly detection, labels are unavailable, so one of the most general principles is to measure how well each sample is supported by the observed data distribution. Under this view, normal samples tend to lie in high-support regions, while anomalies are weakly supported, either because they occur in globally low-density regions or because they are locally isolated from nearby samples~\citep{chandola2009anomaly,ruff2021unifying}. This support-based perspective is attractive for tabular data because it is label-free, model-agnostic, and closely connected to many statistical detectors. However, turning this principle into a robust detector is nontrivial: the relevant density scale is unknown, global and local notions of support capture different anomaly mechanisms, and the feature view in which the anomaly is visible is usually unknown. The examples below illustrate these three challenges and motivate the main design choices of RGLD.

\textbf{Unknown Anomaly Scale:} A central difficulty in density-based anomaly detection is that the appropriate density scale is unknown. As shown in Figure~\ref{fig:motivation_scale}, KDE-style density scoring requires a bandwidth $\sigma$ that controls the resolution of the density estimate~\citep{silverman2018density}. However, anomalies may appear at different resolutions. Fine-scale anomalies may occur near a dense normal cluster, while broad-scale anomalies may be unusual only relative to a more diffuse normal structure. A single bandwidth can therefore emphasize one anomaly type while suppressing another. This motivates the multi-$\sigma$ design of RGLD's global density branch, which aggregates density evidence across several bandwidths instead of committing to one fixed resolution.

\begin{figure}[t]
    \centering
    \includegraphics[width=0.98\linewidth]{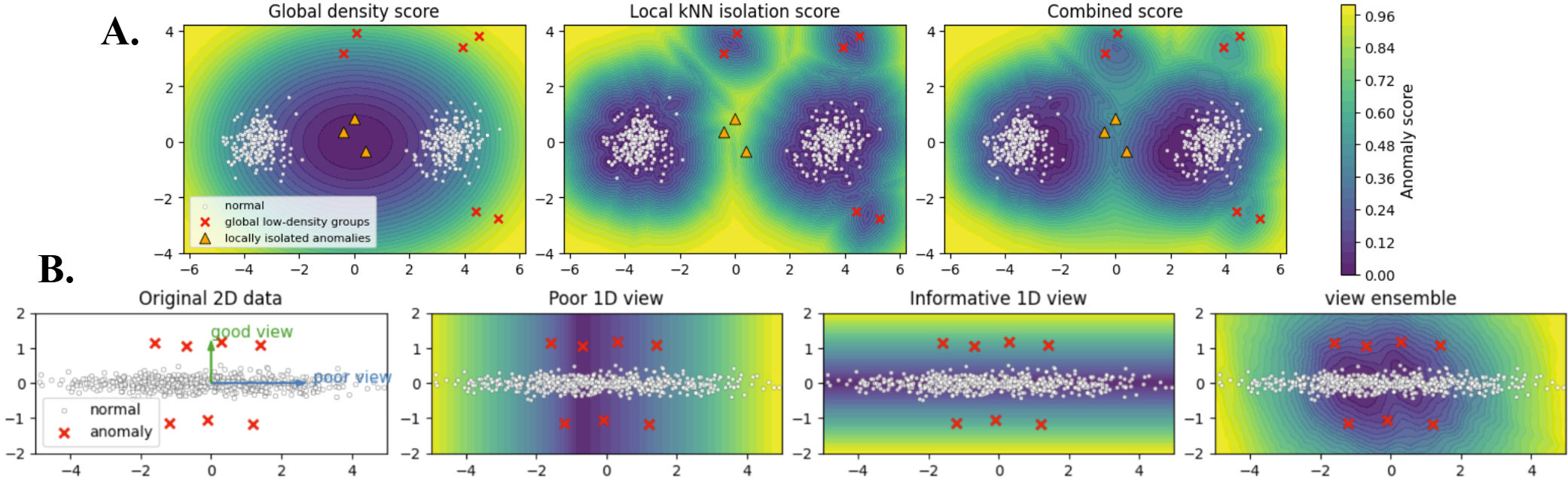}
    \vspace{-4mm}
    \caption{
    \textbf{Complementary evidence and view dependence.}
    \textbf{A.} Global density scoring detects broadly low-density anomalies, while local $k$NN isolation detects samples that are weakly supported by nearby observations. Their combination captures complementary anomaly mechanisms.
    \textbf{B.} Anomalies can be hidden in a poor view but become apparent in an informative projected view. Randomized view ensembling improves robustness by aggregating evidence across many perspectives.}
    \label{fig:motivation_global_local_view}
\end{figure}
\textbf{Global Rarity and Local Isolation Are Complementary:} A second challenge is that abnormality can take different forms. Some anomalies are globally rare and lie in broadly low-density regions of the data distribution. Other anomalies are locally isolated as they may not be globally extreme, but they are poorly supported relative to nearby samples. A detector based only on global density or only on local isolation can miss one of these cases. Here, local isolation is measured using a $k$NN isolation score~\citep{ramaswamy2000efficient}, which assigns higher anomaly scores to points with larger average distances to their k nearest neighbors. Figure~\ref{fig:motivation_global_local_view}A shows this complementarity. The global density score assigns high anomaly values to points far from the main distribution, capturing broad low-density structure. In contrast, the local $k$NN isolation score emphasizes points that are poorly supported by nearby samples, including anomalies between or near normal clusters.

\textbf{Anomalies Are View-Dependent:} A third challenge is that anomalies may be visible only under certain feature views. In tabular data, the relevant abnormal structure often lies in a subset of features or along a particular direction. Irrelevant features or high-variance nuisance directions can obscure the anomaly signal in the full space~\citep{aggarwal2001outlier,kriegel2009outlier}. Thus, a fixed representation can make an anomaly appear normal even when another view separates it clearly. Figure~\ref{fig:motivation_global_local_view}B illustrates this issue. In the original two-dimensional data, the anomaly signal depends strongly on the projection direction. A poor one-dimensional view largely hides the anomalies, while an informative view separates them more clearly. An ensemble over randomized views increases the chance that some views expose the relevant abnormal structure, even when the correct view is unknown in advance.

\section{RGLD Algorithm}
\label{sec:method}

The previous section motivates three design principles: collect density evidence across multiple scales, combine global rarity with local isolation, and evaluate anomaly scores across randomized views. We now describe \textbf{RGLD}, a fully unsupervised anomaly detector that implements these principles through randomized global-local density estimation. Figure~\ref{fig:overview} summarizes the RGLD pipeline. Given an unlabeled dataset $X=\{x_i\}_{i=1}^n \subset \mathbb{R}^d$, RGLD produces an anomaly score $s(x)$, where larger values indicate more anomalous samples. RGLD consists of two complementary branches. The global random-feature density branch estimates broad distributional support using randomized kernel density surrogates. The local neighbor branch estimates local support using sampled nearest-neighbor distances in randomized projected spaces. Following prior work on feature bagging for outlier detection~\citep{lazarevic2005feature}, randomized feature subsets increase view diversity and help expose subspace-specific anomalies. Both branches operate over independently sampled randomized feature subsets and projections, and their scores are combined through rank-based aggregation.

\textbf{Randomized Views:}
For each view $t$, RGLD samples a feature subset $S_t \subseteq \{1,\ldots,d\}$ and a random projection matrix $R_t \in \mathbb{R}^{|S_t|\times r}$, representing a sample $x$ as $z_t(x)=x_{S_t}R_t$. This feature-bagged projection serves two roles: feature subsets help expose subspace-specific anomalies that may be diluted in the full feature space, and random projections reduce dimensionality for high-dimensional data while approximately preserving the geometry needed for anomaly detection~\citep{johnson1984extensions,dasgupta2003elementary}. The global branch uses each view to define a random-feature density estimator, whereas the local neighbor branch uses independently sampled views to define local isolation estimators. Because any single view may be incomplete or noisy, RGLD aggregates scores across many views rather than relying on one fixed representation.
\begin{figure}[t]
    \centering
    \includegraphics[width=0.8\linewidth]{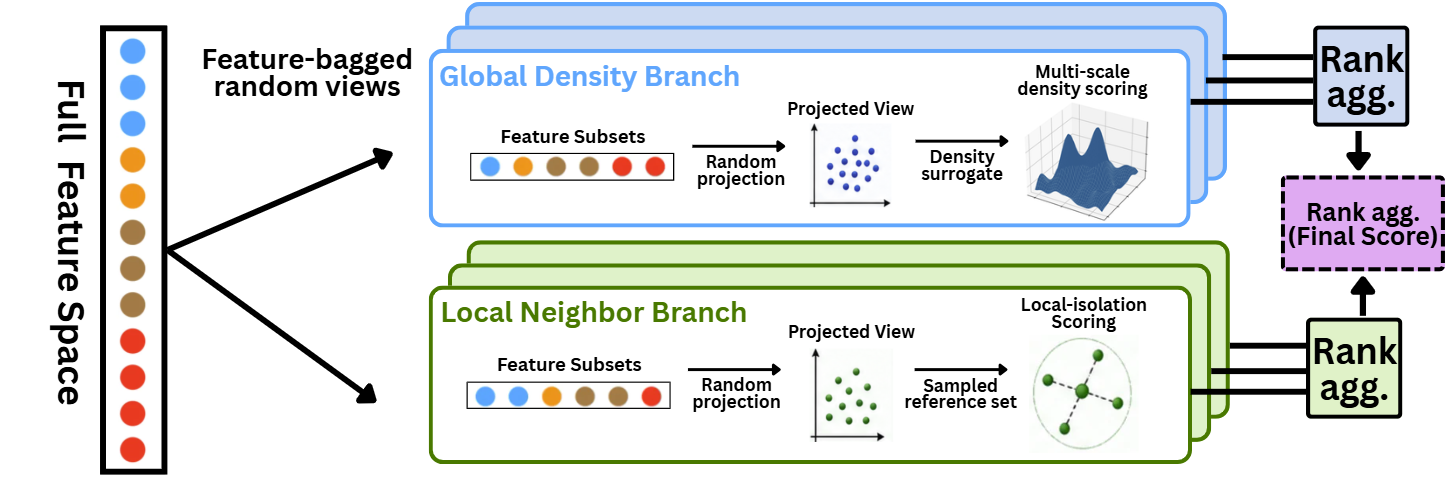}
    \vspace{-4mm}
    \caption{\textbf{RGLD Overview.} RGLD constructs feature-bagged random views and processes them through two complementary branches. The global density branch uses random projections and density surrogates for multi-scale density scoring, capturing global rarity. The local neighbor branch uses projected views and sampled reference sets to measure local isolation. Scores are rank-aggregated within each branch and then rank-aggregated again to produce the final anomaly score.}
    \label{fig:overview}
\end{figure}

\textbf{Global Density Branch:} The global branch measures whether a sample lies in a broadly supported region of the data distribution. A natural nonparametric density-based score is kernel density estimation (KDE)~\citep{rosenblatt1956remarks,parzen1962estimation}: for bandwidth $\sigma$, one may score a sample $x$ by $\widehat{p}_{\sigma}(x)=\frac{1}{n}\sum_{i=1}^n k_{\sigma}(x,x_i)$, where $k_{\sigma}$ is a Gaussian kernel. Low values of $\widehat{p}_{\sigma}(x)$ indicate weak distributional support and therefore high anomaly likelihood. However, evaluating this score for all samples requires pairwise kernel comparisons, leading to $O(n^2)$ cost per view and bandwidth. This becomes expensive when density scoring is repeated across many randomized views and multiple density scales. RGLD avoids this pairwise computation by replacing explicit KDE with random-feature density estimation. For a randomized view $t$, let $z_i^{(t)}=z_t(x_i)$. Random features approximate shift-invariant kernels by finite-dimensional inner products~\citep{rahimi2007random} $k_{\sigma}(z,z')\approx \langle \phi_{t,\sigma}(z),\phi_{t,\sigma}(z')\rangle$. Substituting this approximation into KDE yields the key density estimator:
\begin{equation*}
\small
    \widehat{p}_{t,\sigma}(x)
    \approx \widetilde{p}_{t,\sigma}(x) = 
    \left\langle
        \phi_{t,\sigma}(z_t(x)),
        \mu_{t,\sigma}
    \right\rangle,
    \qquad
    \mu_{t,\sigma}
    =
    \frac{1}{n}
    \sum_{i=1}^{n}
    \phi_{t,\sigma}(z_i^{(t)})
\end{equation*}
Thus, the full training set is compressed into one mean embedding $\mu_{t,\sigma}$~\citep{muandet2016kernel} per randomized view and bandwidth. Scoring a sample only requires an inner product with this stored vector, rather than comparisons to all training samples, reducing the complexity of density estimation over all data points from $O(n^2)$ to $O(n)$. This mean-embedding form is the main efficiency gain of the global branch. To achieve the kernel sum approximation required for KDE, RGLD uses cosine random features. For a view $t$, let $W_t$ be a random frequency matrix, $b_t$ a random phase vector, and $D$ the random-feature dimension. The random frequencies are sampled using quasi-Monte Carlo (QMC)~\citep{yang2014quasi} to reduce approximation variance at small $D$. Then, the random-feature map at bandwidth $\sigma$ is:
\begin{equation*}
\small
    \phi_{t,\sigma}(z)
    =
    \sqrt{\frac{2}{D}}
    \cos\left(
        \frac{W_t z}{\sigma} + b_t
    \right)
\end{equation*}
This form also makes multi-scale density scoring cheap. For each randomized view, RGLD first computes the linear random-feature projection $P_t(z)=W_tz$. Different bandwidths only rescale this same projection inside the cosine map. Therefore, for bandwidths $\sigma_{t,m}=\alpha_m\widehat{\sigma}_t$, where $\widehat{\sigma}_t$ is a base bandwidth estimated from sampled projected distances, RGLD reuses $P_t(z)$ and changes only $P_t(z)/\sigma_{t,m}$. The resulting density support score is:
\begin{equation*}
\small
    d_{t,m}(x)
    =
    \left\langle
        \sqrt{\frac{2}{D}}
        \cos\left(
            \frac{P_t(z_t(x))}{\sigma_{t,m}} + b_t
        \right),
        \mu_{t,m}
    \right\rangle,
    \qquad
    \sigma_{t,m}=\alpha_m\widehat{\sigma}_t 
\end{equation*}
The support score $d_{t,m}(x)$ is large when $x$ is well supported under view $t$ and scale $m$. RGLD converts it to an anomaly score by negation, $a_{t,m}^{\mathrm{glob}}(x)=-d_{t,m}(x)$. Within each global view, scores from different bandwidths can have different scales. RGLD therefore standardizes the negative density scores for each bandwidth and averages them:
\begin{equation*}
\small
    a_t^{\mathrm{glob}}(x)
    =
    \frac{1}{M}
    \sum_{m=1}^{M}
    \mathcal{Z}
    \left(
        a_{t,m}^{\mathrm{glob}}(x)
    \right)
\end{equation*}
where $\mathcal{Z}$ denotes standardization across samples for a fixed bandwidth. The view-level scores $a_t^{\mathrm{glob}}$ are then rank-aggregated across randomized global views to obtain $s_{\mathrm{glob}}$. In this way, the global branch acts as an efficient multi-view, multi-scale approximation to KDE.

\textbf{Local Neighbor Branch:} The global branch captures broad low-density structure, but some anomalies are better characterized by local isolation. RGLD, therefore, includes a local neighbor branch that measures whether a sample is weakly supported by nearby reference points in randomized projected spaces. For each local view $t$, RGLD samples a feature subset and random projection, producing $z_i^{(t)}=z_t(x_i)$. The reference set $C_t$ is sampled uniformly without replacement from the projected training samples $\{z_i^{(t)}\}_{i=1}^n$. We use $\mathcal{N}_k(z_t(x);C_t)$ to denote the set of the k nearest reference points to $z_t(x)$ within $C_t$. For a sample $x$, the projected-neighbor score is the root mean squared distance from $z_t(x)$ to its k nearest points in $C_t$:
\begin{equation*}
\small
    a_t^{\mathrm{loc}}(x)
    =
    \left(
    \frac{1}{k}
    \sum_{u\in\mathcal{N}_k(z_t(x);C_t)}
    \|z_t(x)-u\|_2^2
    \right)^{1/2}
\end{equation*}
Larger values indicate weaker local support and therefore higher anomaly likelihood. This branch differs from a standard full-space $k$NN detector~\citep{ramaswamy2000efficient} in three ways. First, distances are computed in feature-bagged projected spaces rather than in the original feature space. Second, each local view uses a sampled reference set rather than the full training set, reducing the cost of neighbor scoring. Third, RGLD aggregates local isolation scores across randomized views, reducing dependence on any single projection, feature subset, or reference sample. The local neighbor branch score $s_{\mathrm{loc}}$ is obtained by rank-aggregating $a_t^{\mathrm{loc}}$ across local views.

\textbf{Rank-Based Aggregation:} The global and local branches produce scores with different meanings and expose anomalies of different mechanisms. The global branch produces negative density-support scores, while the local neighbor branch produces distance-based isolation scores. Directly averaging raw scores can therefore be unstable. RGLD instead aggregates scores by rank. For a score vector $v\in\mathbb{R}^n$, RGLD maps each score to its normalized rank $\rho_i(v) = \frac{\operatorname{rank}(v_i)-1}{n-1}$ where larger ranks correspond to more anomalous samples. Given score vectors $v^{(1)},\ldots,v^{(T)}$, rank aggregation is
\begin{equation*}
\small
    \operatorname{RankAgg}_i
    \left(
        v^{(1)},\ldots,v^{(T)}
    \right)
    =
    \frac{1}{T}
    \sum_{t=1}^{T}
    \rho_i(v^{(t)})
\end{equation*}
This converts heterogeneous scores into a common scale and emphasizes consistency in anomaly ranking rather than raw score magnitude. The global branch score is $s_{\mathrm{glob}}$, and the local neighbor branch score is $s_{\mathrm{loc}}$. The final RGLD score is:
\begin{equation*}
\small
\begin{split}
    &s(x_i) =\operatorname{RankAgg}_i \left( s_{\mathrm{glob}}, s_{\mathrm{loc}} \right) \\
    \text{{where}}  \quad s_{\mathrm{glob}}(x_i)=\operatorname{RankAgg}_i&(a_1^{\mathrm{glob}},\ldots,a_{T_g}^{\mathrm{glob}}), \quad
    s_{\mathrm{loc}}(x_i)=\operatorname{RankAgg}_i(a_1^{\mathrm{loc}},\ldots,a_{T_\ell}^{\mathrm{loc}})
\end{split}
\end{equation*}
A sample receives a high RGLD score if it is consistently low-density across randomized global density views, locally isolated across projected-neighbor views, or both. This final aggregation treats global rarity and local isolation as complementary evidence while remaining insensitive to the score-scale mismatch between the two branches. Complete pseudo-code and implementation details are provided in Appendix~\ref{app:pseudocode}.

\section{Analysis: Subspace Contrast, Approximation, and Complexity}
\label{sec:analysis}

RGLD relies on randomized views and efficient approximations to obtain global-local anomaly scores. We provide an analysis of why feature-bagged views can increase anomaly contrast when the anomaly signal is sparse, and bound the approximation gaps introduced by the two main computational shortcuts: random-feature global density estimation and sampled-reference local neighbor scoring. These results are not intended as end-to-end anomaly detection guarantees, which would require assumptions on the anomaly-generating process. Instead, they clarify the conditions under which RGLD preserves useful density and locality signals while avoiding quadratic computation.

\textbf{Feature-Bagged Subspace Contrast: }We first formalize the mechanism by which feature-bagged views used by RGLD can yield stronger anomaly contrast than the full feature space when the anomaly 
signal is concentrated in a small set of informative coordinates. Consider an anomaly-normal pair $(x_a,x_n)$. Suppose the anomaly signal lies in an unknown informative feature set $S^\star \subseteq [d]$ with $|S^\star|=s$, while the remaining features are nuisance coordinates.
\begin{proposition}[Feature-bagged subspaces can improve anomaly contrast]
\label{prop:feature-bagging-contrast}
Let $\mu_{\mathrm{sig}}$ and $\mu_{\mathrm{noise}}$ denote the average squared anomaly-normal separation on informative and nuisance coordinates, respectively, with $\mu_{\mathrm{sig}}>\mu_{\mathrm{noise}}$. For a feature subset $S\subseteq[d]$ of size $m$, define $\pi(S)=\frac{|S\cap S^\star|}{m}, C(S)=\frac{1}{m}\mathbb{E}\|(x_a-x_n)_S\|_2^2$. If $\pi(S)>\frac{s}{d}$, then $C(S)>C([d])$. i.e., has a larger average anomaly contrast than full space. Furthermore, $\Pr\left(\pi(S)>\frac{s}{d}\right)=\Pr\left(|S\cap S^\star|>\frac{ms}{d}\right)$. \textit{(See Appendix~\ref{app:prop1} for proof)}
\end{proposition}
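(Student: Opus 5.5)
The plan is to write $C(S)$ as a convex combination of $\mu_{\mathrm{sig}}$ and $\mu_{\mathrm{noise}}$ and read off the comparison with the full space.

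\textbf{Modeling step.} I read the hypotheses coordinatewise. For every $j\in S^\star$, set $\mathbb{E}[(x_a-x_n)_j^2]=\mu_{\mathrm{sig}}$, and for every $j\notin S^\star$, set $\mathbb{E}[(x_a-x_n)_j^2]=\mu_{\mathrm{noise}}$. If instead $\mu_{\mathrm{sig}}$ and $\mu_{\mathrm{noise}}$ are only averages over their respective coordinate groups, the same argument works for a uniformly random $S$ after taking expectations over which coordinates are drawn. I will state this reading explicitly, because it is the one modeling choice the proof depends on.

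\textbf{Expanding $C(S)$.} By linearity of expectation, $\mathbb{E}\|(x_a-x_n)_S\|_2^2$ is a sum over $j\in S$. Splitting that sum into $S\cap S^\star$ and $S\setminus S^\star$ gives
\[
C(S)=\frac{1}{m}\Big(|S\cap S^\star|\,\mu_{\mathrm{sig}}+(m-|S\cap S^\star|)\,\mu_{\mathrm{noise}}\Big)=\mu_{\mathrm{noise}}+\pi(S)\,(\mu_{\mathrm{sig}}-\mu_{\mathrm{noise}}).
\]
This is an affine function of $\pi(S)$. Since $\mu_{\mathrm{sig}}>\mu_{\mathrm{noise}}$, its slope is strictly positive.

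\textbf{Comparison with the full space.} Take $S=[d]$, so $m=d$ and $\pi([d])=s/d$. Then $C([d])=\mu_{\mathrm{noise}}+\frac{s}{d}(\mu_{\mathrm{sig}}-\mu_{\mathrm{noise}})$. Subtracting the two expressions gives
\[
C(S)-C([d])=\Big(\pi(S)-\frac{s}{d}\Big)(\mu_{\mathrm{sig}}-\mu_{\mathrm{noise}}).
\]
Both factors are strictly positive whenever $\pi(S)>s/d$, so $C(S)>C([d])$.

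\textbf{Probability statement.} Multiplying the inequality $\pi(S)>s/d$ by $m>0$ turns it into the equivalent event $|S\cap S^\star|>ms/d$, so the two probabilities are equal. I will also note that when $S$ is drawn uniformly among $m$-subsets, $|S\cap S^\star|$ is hypergeometric with parameters $(d,s,m)$ and mean $ms/d$. The event is therefore an upper-tail event of that hypergeometric variable, and one could bound it explicitly if desired.

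The only real obstacle is the interpretation of $\mu_{\mathrm{sig}}$ and $\mu_{\mathrm{noise}}$. Once that is fixed, the rest is a one-line linearity computation.
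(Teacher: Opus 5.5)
Your proposal is correct and follows the paper's proof essentially line for line: you split $\mathbb{E}\|(x_a-x_n)_S\|_2^2$ into informative and nuisance parts, write $C(S)=\pi(S)\mu_{\mathrm{sig}}+(1-\pi(S))\mu_{\mathrm{noise}}$, subtract $C([d])$ to get $(\pi(S)-s/d)(\mu_{\mathrm{sig}}-\mu_{\mathrm{noise}})$, and multiply by $m$ to get the probability identity; your hypergeometric remark matches the paper's follow-up note. Your explicit per-coordinate reading of $\mu_{\mathrm{sig}}$ and $\mu_{\mathrm{noise}}$ is the assumption the paper uses without stating it in its decomposition step, and you are right that under the weaker ``group average'' reading the identity holds only in conditional expectation over a uniformly random $S$ given $|S\cap S^\star|$.
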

Proposition~\ref{prop:feature-bagging-contrast} motivates the randomized-view design of RGLD. A fixed full-space representation can dilute sparse anomaly signals with nuisance features, while feature bagging increases the chance of seeing a view where the relevant coordinates are more concentrated. Random projection then provides an efficient alternative geometry within each selected subspace, preserving useful distance-based density and neighborhood signals while reducing cost.

\textbf{Global Density Approximation Gap: }Here, we analyze the approximation gap introduced by the global random-feature density branch. The method section defines the exact projected KDE score $\widehat{p}_{t,\sigma}(x)$ and its random-feature approximation $\widetilde{p}_{t,\sigma}(x)$. Here, we ask how far the surrogate can deviate from the exact KDE score for a fixed randomized view $t$ and bandwidth $\sigma$. The following proposition is a direct consequence of the classical random Fourier feature approximation bound~\citep{rahimi2007random,sutherland2015error}:
\begin{proposition}[RFF approximation of KDE scores]
\label{prop:rf_kde}
Fix a randomized view $t$, bandwidth $\sigma$, and training set $\{z_i^{(t)}\}_{i=1}^{N}$. Then for any fixed evaluation point $x$, with probability at least $1-\delta$,
\begin{equation*}
\small
    \left|
    \widetilde{p}_{t,\sigma}(x)
    -
    \widehat{p}_{t,\sigma}(x)
    \right|
    \le
    O\left(
    \sqrt{\frac{\log(1/\delta)}{D}}
    \right) \quad \text{\textit{(See Appendix~\ref{app:prop2} for proof})}
\end{equation*}
\end{proposition}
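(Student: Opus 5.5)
The plan is to write the random-feature surrogate as an empirical average over the $D$ random features and then apply Hoeffding's inequality, with the randomness taken only over the draw of $(W_t,b_t)$ and the view $t$, bandwidth $\sigma$, training set and evaluation point $x$ held fixed. Write $z=z_t(x)$, $z_i=z_i^{(t)}$, and let $w_j$, $b_j$ denote the $j$-th row of $W_t$ and the $j$-th entry of $b_t$. Expanding the inner product gives
\begin{equation*}
\widetilde{p}_{t,\sigma}(x)=\frac{1}{D}\sum_{j=1}^{D} Y_j,\qquad Y_j=\frac{1}{N}\sum_{i=1}^{N} 2\cos\!\Big(\frac{w_j^\top z}{\sigma}+b_j\Big)\cos\!\Big(\frac{w_j^\top z_i}{\sigma}+b_j\Big).
\end{equation*}

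\textbf{Unbiasedness.} For i.i.d.\ Monte Carlo frequencies $w_j$ drawn from the spectral measure of the Gaussian kernel and $b_j\sim\mathrm{Unif}[0,2\pi]$, the product-to-sum identity gives
\begin{equation*}
2\cos(a+b)\cos(c+b)=\cos(a-c)+\cos(a+c+2b).
\end{equation*}
The second term averages to zero over the phase $b_j$. By Bochner's theorem, the first term has expectation $k_\sigma(z,z_i)$, as in \citep{rahimi2007random}. Hence $\mathbb{E}[Y_j]=\frac1N\sum_i k_\sigma(z,z_i)=\widehat{p}_{t,\sigma}(x)$.

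\textbf{Concentration.} Each $Y_j$ lies in $[-2,2]$, since it is an average of terms bounded by $2$ in absolute value. The $Y_j$ are also independent across $j$. Hoeffding's inequality therefore gives
\begin{equation*}
\Pr\big(|\widetilde{p}_{t,\sigma}(x)-\widehat{p}_{t,\sigma}(x)|>\epsilon\big)\le 2\exp(-D\epsilon^2/8).
\end{equation*}
Setting the right-hand side equal to $\delta$ yields, with probability at least $1-\delta$, a deviation of at most $\sqrt{8\log(2/\delta)/D}=O(\sqrt{\log(1/\delta)/D})$. Two remarks on this step:
\begin{itemize}
\item Averaging over $i$ before applying concentration is what keeps the bound independent of $N$. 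We concentrate the averaged quantity directly rather than taking a union bound over the $N$ pairwise kernel approximations.
\item A uniform-over-$x$ bound in the style of \citet{sutherland2015error} is not needed, because the statement concerns a fixed evaluation point.
\end{itemize}

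\textbf{Main obstacle: the QMC frequencies.} The paper actually draws $W_t$ by quasi-Monte Carlo, so the rows are not i.i.d.\ and Hoeffding does not apply verbatim. There are two ways to handle this:
\begin{enumerate}
\item State the bound for the i.i.d.\ Monte Carlo construction. This is the reference setting, and QMC is used only heuristically to reduce variance.
\item Use a randomized QMC sequence, such as a random shift or scrambling, together with the independent uniform phases $b_j$. Unbiasedness is preserved, and the bound then follows through a bounded-differences (McDiarmid) argument, or through the known QMC error rates of \citet{yang2014quasi}, which are no worse than the Monte Carlo rate.
\end{enumerate}
I would present the i.i.d.\ case as the proof and remark that it carries over to randomized QMC. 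A smaller point to state explicitly is that, although the scale $\widehat{\sigma}_t$ is estimated from the data, we condition on it and on $b_t$'s independence from the frequencies, so the conditioning is legitimate.
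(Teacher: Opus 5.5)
Your proposal is correct and matches the paper's proof essentially line for line. Both write $\widetilde{p}_{t,\sigma}(x)$ as an average of $D$ independent per-feature terms lying in $[-2,2]$, get unbiasedness from standard RFF theory, and apply Hoeffding to reach $\sqrt{8\log(2/\delta)/D}$; proving the i.i.d.\ Monte Carlo case and treating QMC as a variance-reduction heuristic is also how the paper's closing remark handles it (your McDiarmid aside for randomized QMC is unnecessary and would not work as stated, since a shared random shift moves all $D$ frequencies at once and gives an $O(1)$ bounded difference).
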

Proposition~\ref{prop:rf_kde} shows that the global branch remains close to the exact KDE score in each fixed view and bandwidth, with error decreasing as $D^{-1/2}$. Thus, RGLD can use many inexpensive random-feature density estimators while retaining a controlled approximation to KDE density scoring.

\textbf{Local Support Approximation Gap:}
We next analyze the approximation introduced by the local neighbor branch. In a fixed projected view $t$, full $k$NN scoring compares each point against all $n$ projected training samples. RGLD instead samples a reference set $C_t$ and computes $k$NN distances only against this subset. This reduces computation but introduces a sampling gap: nearby training samples may be missed if they are not included in $C_t$.

\begin{proposition}[Reference coverage for local support]
\label{prop:kNN_reference}
Fix a projected view $t$, an evaluation point $x$, and radius $\rho>0$. Let 
$B_t(x,\rho)=\{z_i^{(t)}:\|z_t(x)-z_i^{(t)}\|_2\le \rho\}$ be the set of projected training samples within radius $\rho$, and let $q_t(x,\rho)=|B_t(x,\rho)|/n$ be its empirical mass. If $C_t$ is sampled uniformly without replacement from the projected training samples, then
\begin{equation*}
\small
    \Pr\left[
    C_t \cap B_t(x,\rho)=\emptyset
    \right]
    \le
    \exp\left(-|C_t| q_t(x,\rho)\right) \quad \text{\textit{(See Appendix~\ref{app:prop3} for proof)}}
\end{equation*}
Equivalently, with probability at least $1-\exp(-|C_t| q_t(x,\rho))$, the reference set contains at least one point within distance $\rho$ of $z_t(x)$.
\end{proposition}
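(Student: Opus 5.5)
The plan is to compute the probability of the bad event exactly under sampling without replacement, and then bound it by the with-replacement quantity. Write $c=|C_t|$, $b=|B_t(x,\rho)| = n\,q_t(x,\rho)$, and regard $C_t$ as a uniformly random $c$-subset of the $n$ indexed projected training samples. Points of $B_t(x,\rho)$ are counted by index, so duplicates cause no trouble. The reference set misses the ball exactly when all $c$ draws come from the $n-b$ indices outside it. Hence
\begin{equation*}
\Pr\left[C_t\cap B_t(x,\rho)=\emptyset\right]=\frac{\binom{n-b}{c}}{\binom{n}{c}}=\prod_{j=0}^{c-1}\frac{n-b-j}{n-j}.
\end{equation*}
This expression is taken to be $0$ when $c>n-b$, and in that case the claim is trivial.

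The key step is to bound each factor. For $0\le j<n$ we have
\begin{equation*}
\frac{n-b-j}{n-j}=1-\frac{b}{n-j}\le 1-\frac{b}{n}=1-q_t(x,\rho).
\end{equation*}
This is the one place where sampling without replacement helps. Each successive draw removes a non-ball point, so the remaining pool becomes relatively richer in ball points, and the miss probability per draw can only decrease. Multiplying the $c$ factors gives
\begin{equation*}
\Pr\left[C_t\cap B_t(x,\rho)=\emptyset\right]\le (1-q_t(x,\rho))^{c}.
\end{equation*}

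Finally, the elementary inequality $1-u\le e^{-u}$, valid for $u\in[0,1]$, gives $(1-q)^c\le e^{-cq}$, which is the stated bound. The "equivalently" sentence is just the complement: the event $C_t\cap B_t(x,\rho)\neq\emptyset$ is the same as the reference set containing a point within distance $\rho$ of $z_t(x)$.

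There is no real obstacle here. The only care needed is the per-factor comparison and the degenerate cases $b=0$ (bound equals $1$) and $c>n-b$ (probability $0$). One remark should be made explicit: $x$ and $\rho$ must be fixed independently of the draw of $C_t$, and the randomness is only over the uniform choice of indices.
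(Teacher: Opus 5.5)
Your proposal is correct and follows essentially the same route as the paper: write the exact hypergeometric miss probability $\binom{n-b}{c}/\binom{n}{c}$ as a product, bound each factor by $\frac{n-b-j}{n-j}\le 1-q_t(x,\rho)$, and finish with $1-u\le e^{-u}$, treating the case $c>n-b$ as trivial. Your added remarks on counting ball points by index and on fixing $x,\rho$ independently of the draw of $C_t$ are harmless refinements that the paper leaves implicit.
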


Proposition~\ref{prop:kNN_reference} shows that the probability of missing a local neighborhood decreases exponentially with the reference size $|C_t|$ and the empirical local mass $q_t(x,\rho)$. Thus, sampled-reference $k$NN preserves local support when nearby samples are not too rare, while assigning large scores to points with weak local support. For $k>1$, the same argument extends by considering the probability that $C_t$ contains at least $k$ points in $B_t(x,\rho)$, which follows a hypergeometric tail bound~\citep{hoeffding1963probability,serfling1974probability}. Aggregating across multiple projected-neighbor views and independently sampled reference sets further reduces dependence on any single reference sample.

\textbf{Computational Complexity: } Let $n$ be the number of samples, $d_t$ the number of selected features in a view, $r$ the projection dimension, $D$ the random-feature dimension, $M$ the number of bandwidths, $T_g$ the number of global views, $T_\ell$ the number of projected-neighbor views, and $|C_t|$ the reference-set size. For one global view, exact multi-scale KDE requires $O(Mn^2r)$ kernel computations. RGLD replaces this with random-feature estimation: feature-bagged projection costs $O(nd_t r)$, the random-feature projection costs $O(nrD)$, and evaluating $M$ bandwidths costs $O(MnD)$. Across global views, the cost is $O\left(T_g(nd_t r + nrD + MnD)\right)$. The key saving is avoiding the $n^2$ KDE term. For projected-neighbor views, full $k$NN costs $O(n^2r)$. RGLD instead compares each point to a sampled reference set of size $|C_t|$, giving $O\left(T_\ell(nd_t r+n|C_t|r)\right)$ across local views. Rank aggregation adds $O(T_g n\log n + T_\ell n\log n)$ sorting cost. Overall, RGLD replaces the quadratic costs of exact KDE and full $k$NN with near linear-in-$n$ randomized approximations, up to rank-sorting overhead, controlled by $D$, $|C_t|$, $r$. Since randomized views are independent, both branches are naturally parallelizable.

\section{Experimental Analysis}
\label{sec:experiments}

\textbf{Experimental Setup: } We evaluate RGLD on 47 tabular datasets from ADBench~\citep{han2022adbench}, which span diverse real-world anomaly detection settings including healthcare, finance, network/web security, image-derived features, documents, and scientific measurements. We compare against 23 baselines from ADBench~\citep{han2022adbench}, DeepOD~\citep{xu2023deep,xu2024calibrated}, and the recently proposed ADERH~\citep{durani2026anomaly}, covering both statistical and deep anomaly detectors. All methods are evaluated in the fully unsupervised setting without using labels for model selection or hyperparameter tuning. Baselines use the official ADBench/DeepOD default configurations. We report Area Under the Receiver Operating Characteristic Curve (AUROC), Area Under the Precision-Recall Curve (AUPRC), and average runtime across datasets. Statistical methods are run on an Intel Core i7-155H CPU, while deep methods are run on an NVIDIA RTX 4070 Laptop GPU. RGLD uses one fixed default configuration across all datasets. The global random-feature density branch uses $T_g=40$ randomized views, QMC random features with dimension $D=10$, median-distance bandwidth estimation from $8,192$ sampled pairs, and multi-scale bandwidth multipliers $\{0.25,0.5,1.0,2.0\}$. The local neighbor branch uses $T_\ell=10$ randomized views, reference set size $400$, and sampled reference size $|C_t|=400$ and $k=10$. For both the global and local neighbor branches, each randomized view samples $30\%$ of the input features before applying a random projection. Full RGLD configurations are provided in Appendix~\ref{app:hyperparameter}. 

\textbf{Anomaly Detection Results: }
\begin{table}[t]
\caption{
\textbf{Main benchmark results.} On 47 ADBench tabular datasets across RGLD and 23 statistical and deep learning methods, all evaluated under the same fully unsupervised protocol. \fcolorbox{black}{first}{\rule{0pt}{3pt}\rule{3pt}{0pt}} \fcolorbox{black}{second}{\rule{0pt}{3pt}\rule{3pt}{0pt}} \fcolorbox{black}{third}{\rule{0pt}{3pt}\rule{3pt}{0pt}} Marks 1st, 2nd and 3rd place respectively.}
\centering
\tiny
\setlength{\tabcolsep}{4pt}
\begin{tabular}{l l c c c c c c c}
\toprule
Method & Type & AUROC & AUROC Wins  & AUPRC & AUPRC Wins & Avg. Rank & Time (s) & Relative Speed\\
\midrule
RGLD\textbf{*} & Statistical & \second{77.50} & \first{8} & \third{38.59} & \second{5} & \second{9.72} & 0.22 & 1\\
SLAD~\citep{xu2023fascinating} & DL & 73.39 & \second{7} & 35.23 & \first{10} & \third{9.76} & 39.78 & 0.01\\
ADERH~\citep{durani2026anomaly} & Statistical & \first{78.18} & \third{4} & \first{39.72} & 3 & \first{8.22} & 1.06 & 0.21\\
REPEN~\citep{pang2018learning} & DL & 74.72 & \third{4} & 37.44 & \third{4} & 10.52 & 47.88 & 0.005\\
COPOD~\citep{li2020copod} & Statistical & 74.39 & \third{4} & 35.22 & 3 & 11.52 & \third{0.07} & \third{3.3}\\
NeuTraL~\citep{qiu2021neural} & DL & 70.76 & \third{4} & 28.12 & \third{4} & 12.04 & 54.21 & 0.004 \\
$k$NN~\citep{ramaswamy2000efficient} & Statistical & 69.87 & \third{4} & 31.54 & 2 & 11.96 & 0.84 & 0.26\\
ICL~\citep{shenkar2022anomaly} & DL & 58.85 & \third{4} & 21.97 & 3 & 16.63 & 128.95 & 0.002\\
PCA~\citep{shyu2003novel} & Statistical & 73.60 & 3 & 38.27 & 1 & 10.92 & 0.66 & 0.33\\
HBOS~\citep{goldstein2012histogram} & Statistical & 74.04 & 3 & 36.96 & 2 & 10.84 & \first{0.04} & \first{5.41}\\
GOAD~\citep{bergman2020classification} & DL & 68.73 & 2 & 33.25 & 3 & 12.54 & 80.31 & 0.003\\
IForest~\citep{liu2008isolation} & Statistical & \third{76.17} & 1 & \second{38.78} & 0 & 9.91 & 0.52 & 0.42\\
DIF~\citep{xu2023deep} & DL & 76.13 & 1 & 36.62 & 0 & 10.43 & 15.88 & 0.01\\
CBLOF~\citep{he2003discovering} & Statistical & 74.47 & 1 & 37.32 & 2 & 10.45 & 0.30 & 0.73\\
ECOD~\citep{li2022ecod} & Statistical & 73.95 & 1 & 35.80 & 2 & 10.84 & 0.10 & 2.26\\
RDP~\citep{wang2019unsupervised} & DL & 74.15 & 1 & 30.99 & 2 & 11.43 & 29.41 & 0.01\\
OCSVM~\citep{scholkopf2001estimating} & Statistical & 69.98 & 1 & 33.23 & 1 & 13.63 & 1.80 & 0.12\\
LOF~\citep{breunig2000lof} & Statistical & 61.50 & 1 & 21.47 & 2 & 15.57 & 0.39 & 0.56\\
COF~\citep{tang2002enhancing} & Statistical & 62.75 & 1 & 23.07 & 0 & 15.98 & 18.53 & 0.01\\
DeepSVDD~\citep{ruff2018deep} & DL & 53.61 & 1 & 18.56 & 1 & 18.28 & 11.23 & 0.02\\
RCA~\citep{liu2021rca} & DL & 72.82 & 0 & 34.00 & 1 & 12.49 & 39.85 & 0.01\\
SOD\citep{kriegel2009outlier} & Statistical & 68.73 & 0 & 26.19 & 0 & 13.78 & 2.64 & 0.08\\
LODA~\citep{pevny2016loda} & Statistical & 64.93 & 0 & 30.52 & 1 & 15.29 & \second{0.05} & \second{4.84}\\
DAGMM~\citep{zong2018deep} & DL & 64.51 & 0 & 24.58 & 3 & 16.07 & 13.25 & 0.02\\
\bottomrule
\end{tabular}
\vspace{-2mm}
\label{tab:main_results}
\end{table}
Table~\ref{tab:main_results} summarizes the main results on 47 ADBench tabular datasets against 23 baselines. RGLD shows strong robustness across datasets, obtaining the most AUROC wins with $8$ first-place results. It also ranks second in mean AUROC with $77.50$, third in mean AUPRC with $38.59$, and second in AUPRC wins with $5$ wins. These results show that RGLD is consistently competitive across both ranking-based anomaly detection metrics. RGLD is also substantially more efficient than deep anomaly detectors, achieving 50$\times$–580$\times$ speedups. While ADERH obtains the best mean AUROC and AUPRC, RGLD is approximately $5\times$ faster and achieves twice as many AUROC wins. Compared with SLAD, which has the most AUPRC wins, RGLD is roughly two orders of magnitude faster. Although a few simple statistical methods are faster, they generally sacrifice accuracy and dataset-level wins. Overall, RGLD provides one of the strongest accuracy-efficiency tradeoffs among all evaluated methods, combining top-tier detection performance with lightweight runtime. Because ADERH is the strongest baseline in mean AUROC and AUPRC, we provide a detailed paired comparison in Appendix~\ref{app:aderh_comparison}. The comparison shows that RGLD is statistically competitive with ADERH in AUROC and AUPRC, while being significantly faster.

\textbf{Accuracy-Efficiency Tradeoff: }Table~\ref{tab:main_results} shows that RGLD achieves top-tier detection accuracy. We next examine its accuracy-efficiency tradeoff. Figure~\ref{fig:pareto} plots mean AUROC and mean AUPRC against average runtime across the 47 datasets, with runtime on a logarithmic scale. The figure compares the baselines with two RGLD variants (RGLD-Fast variant uses a reduced number of views). RGLD occupies a favorable accuracy-efficiency region in both metrics. In AUROC, RGLD performs close to the strongest method, ADERH, while running substantially faster. RGLD-Fast further shifts the tradeoff toward lower runtime while retaining high AUROC. In contrast, deep methods such as SLAD, REPEN, NeuTraL, GOAD, and ICL require much longer runtime while giving lower or comparable AUROC. Fast statistical methods are efficient, but many fall below RGLD in accuracy. The AUPRC plot shows a similar pattern, where RGLD remains among the top-performing methods and ahead of most statistical and deep baselines, while being highly efficient. RGLD-Fast again provides a faster variant with only a modest AUPRC reduction. These results support RGLD's central efficiency claim: randomized global-local density estimation can achieve competitive detection accuracy without the high training cost of deep anomaly detectors.
\begin{figure}[t]
    \centering
    \includegraphics[width=0.85\linewidth]{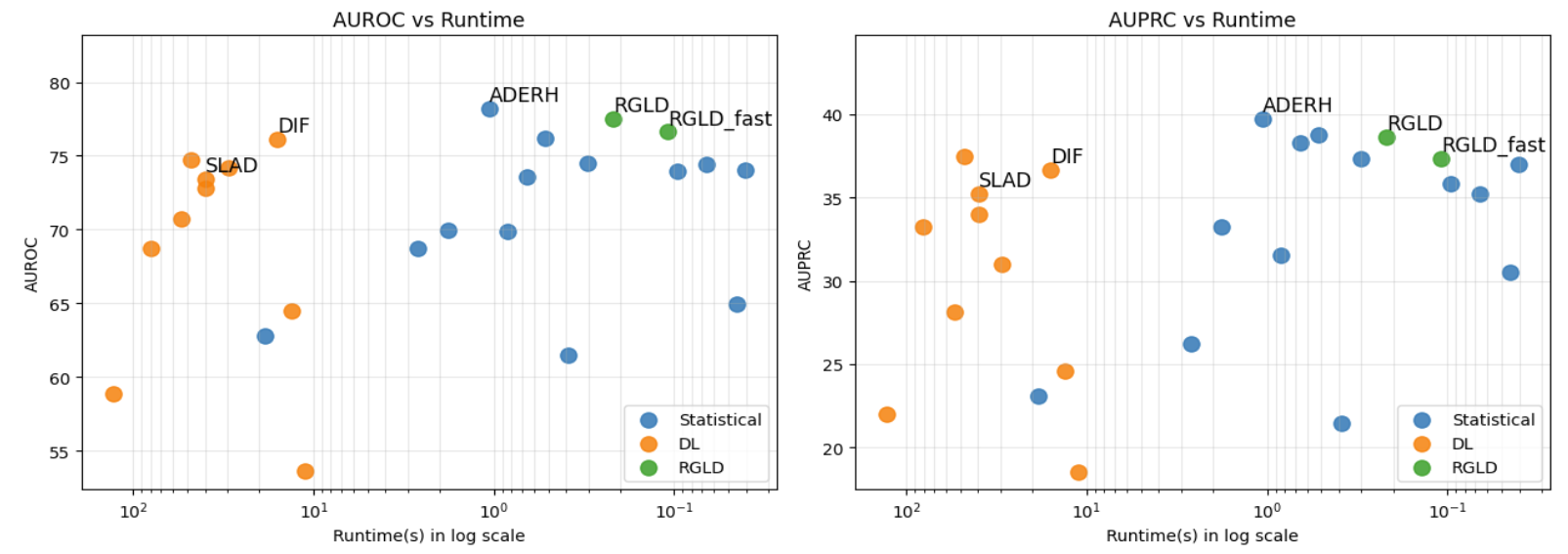}
    \vspace{-4mm}
    \caption{
    \textbf{Accuracy-efficiency tradeoff.} Mean AUROC and AUPRC versus average runtime across 47 ADBench tabular datasets. Runtime is shown on a logarithmic scale. For the RGLD-fast variant, we reduced the number of views, random feature dimension, and $k$NN reference set to half of the default RGLD. The full Pareto front graph can be found in Appendix~\ref{app:frontier}}
    \label{fig:pareto}
\end{figure}

\textbf{Ablation Study on Different RGLD Components: }
\begin{table}[t]
\caption{
\textbf{Ablation study on different RGLD components.} We report mean AUROC, mean AUPRC, average runtime, and changes relative to full RGLD. Removing any major component degrades performance, with the largest drop coming from removing the randomized-view ensemble.}
\centering
\footnotesize
\setlength{\tabcolsep}{5pt}
\begin{tabular}{l c c c c c c}
\toprule
Variant & AUROC $\uparrow$ & AUPRC $\uparrow$ & Time (s) $\downarrow$ 
& $\Delta$AUROC & $\Delta$AUPRC & $\Delta$Time \\
\midrule
RGLD & \textbf{77.50} & \textbf{38.59} & 0.22 & - & - & - \\
w/o multi-$\sigma$ & 77.24 & 38.46 & 0.19 & $-0.25$ & $-0.13$ & $-0.02$ \\
w/o feature bagging & 77.02 & 36.07 & 0.22 & $-0.48$ & $-2.51$ & $0.00$ \\
w/o local-neighbor & 75.52 & 38.07 & 0.15 & $-1.98$ & $-0.51$ & $-0.07$ \\
w/o view ensemble & 73.16 & 32.29 & \textbf{0.06} & $-4.33$ & $-6.30$ & $-0.16$ \\
\bottomrule
\end{tabular}
\vspace{-2mm}
\label{tab:ablation}
\end{table}
Table~\ref{tab:ablation} studies the contribution of RGLD's main components. The full model achieves the best AUROC and AUPRC, indicating that the proposed components are complementary rather than redundant. Removing multi-scale density scoring causes a modest drop, suggesting that aggregating multiple bandwidths improves robustness to unknown anomaly scale. Removing feature bagging leads to a larger AUPRC degradation, supporting the importance of subspace diversity. The local neighbor branch also contributes substantially: removing it reduces AUROC by $1.98$, showing that local isolation provides useful information beyond global density estimation. The largest degradation occurs when removing the randomized-view ensemble, which reduces AUROC by $4.33$ and AUPRC by $6.3$. Although this variant is faster, the large accuracy drop confirms the central design principle of RGLD: many inexpensive randomized views are essential for producing a robust anomaly ranking.

\textbf{Sensitivity to Ensemble Size and Random-Feature Dimension: }
\begin{figure}[t]
    \centering
    \includegraphics[width=\linewidth]{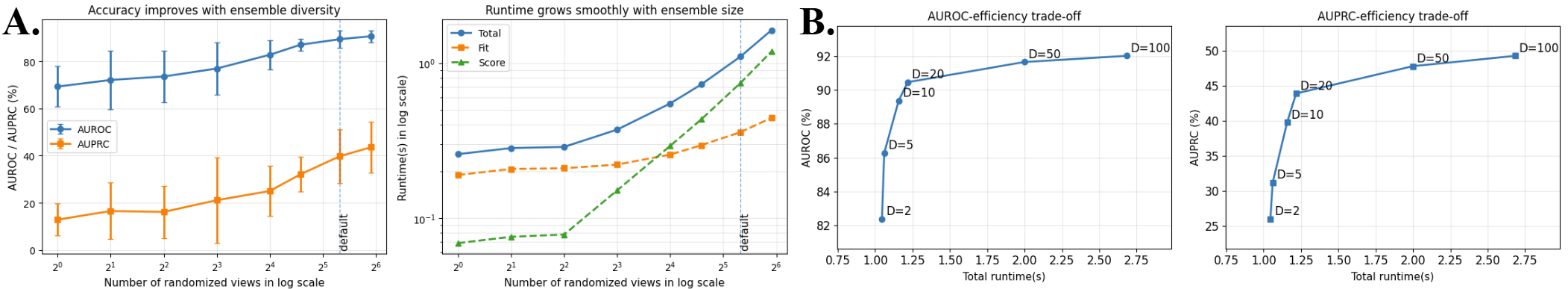}
    \vspace{-6mm}
    \caption{
    \textbf{Sensitivity to randomized views and random-feature dimension.}
    \textbf{A.} Increasing the number of randomized views improves AUROC and AUPRC for sparse subspace anomalies.
    \textbf{B.} Increasing the random-feature dimension $D$ improves accuracy with diminishing returns, illustrating the accuracy-efficiency tradeoff of the global density branch.
    }
    \label{fig:sensitivity}
\end{figure}
To isolate the mechanisms behind RGLD, we conduct controlled synthetic experiments in addition to the ADBench. The synthetic data contain sparse subspace anomalies: normal samples are drawn from an isotropic Gaussian distribution, while anomalies are shifted along a small subset of informative dimensions. This setup allows us to test whether increasing the number of randomized views helps expose subspace-specific anomaly signals. Figure~\ref{fig:sensitivity} studies two key efficiency parameters in RGLD: the number of randomized views and the random-feature dimension $D$. Panel A shows that increasing the number of randomized views improves both AUROC and AUPRC, indicating that the ensemble contributes complementary anomaly evidence rather than merely adding redundant detectors. Runtime grows smoothly with the number of views, and the default configuration lies near the high-performance region of the curve. This supports the central design principle of RGLD: aggregating many inexpensive randomized views yields a stronger anomaly ranking. Panel B studies the accuracy-efficiency tradeoff controlled by the random-feature dimension $D$. Larger $D$ improves both AUROC and AUPRC, consistent with the random-feature approximation analysis in Section~\ref{sec:analysis}. The improvement is largest when increasing $D$ from very small values and then gradually saturates, indicating diminishing returns at larger feature dimensions. Together, these results show that RGLD provides a controllable accuracy-efficiency tradeoff through both the number of randomized views and the random-feature dimension.


\section{Conclusion}
We introduced \textbf{RGLD}, a randomized global-local density estimator for fully unsupervised tabular anomaly detection. RGLD estimates complementary global and local anomaly signals from feature-bagged randomized views, allowing it to capture multiple anomaly mechanisms with efficiency. Empirically, RGLD achieves a strong accuracy-efficiency tradeoff on 47 ADBench tabular datasets against 23 statistical and deep baselines under a fully unsupervised setting, ranking 1st in AUROC wins, and 2nd in AUPRC wins while running 50$\times$--580$\times$ faster than evaluated deep anomaly detectors. Ablations show that randomized views, feature bagging, multi-scale density scoring, and projected-neighbor scoring each contribute to performance. These results suggest that the gains come not from a single component, but from combining many weak randomized density and locality signals into a stable ensemble. More broadly, RGLD shows that well-designed statistical methods can remain highly competitive for tabular anomaly detection when equipped with the right randomized inductive biases.

\section*{Acknowledgment}
This work was supported in part by PRISM and CoCoSys, centers in JUMP 2.0, an SRC program sponsored by DARPA (SRC grant number - 2023-JU-3135). This work was also supported by NSF grants \#2003279, \#1911095, \#2112167, \#2052809, \#2112665, \#2120019, \#2211386.

\newpage
\bibliography{iclr2027_conference}

@inproceedings{breunig2000lof,
  title={LOF: identifying density-based local outliers},
  author={Breunig, Markus M and Kriegel, Hans-Peter and Ng, Raymond T and Sander, J{\"o}rg},
  booktitle={Proceedings of the 2000 ACM SIGMOD international conference on Management of data},
  pages={93--104},
  year={2000}
}

@inproceedings{gungor2024robust,
  title={A robust framework for evaluation of unsupervised time-series anomaly detection},
  author={Gungor, Onat and Rios, Amanda and Mudgal, Priyanka and Ahuja, Nilesh and Rosing, Tajana},
  booktitle={International Conference on Pattern Recognition},
  pages={48--64},
  year={2024},
  organization={Springer}
}

@inproceedings{tang2002enhancing,
  title={Enhancing effectiveness of outlier detections for low density patterns},
  author={Tang, Jian and Chen, Zhixiang and Fu, Ada Wai-Chee and Cheung, David W},
  booktitle={Pacific-Asia conference on knowledge discovery and data mining},
  pages={535--548},
  year={2002},
  organization={Springer}
}

@article{he2003discovering,
  title={Discovering cluster-based local outliers},
  author={He, Zengyou and Xu, Xiaofei and Deng, Shengchun},
  journal={Pattern recognition letters},
  volume={24},
  number={9-10},
  pages={1641--1650},
  year={2003},
  publisher={Elsevier}
}

@article{scholkopf2001estimating,
  title={Estimating the support of a high-dimensional distribution},
  author={Sch{\"o}lkopf, Bernhard and Platt, John C and Shawe-Taylor, John and Smola, Alex J and Williamson, Robert C},
  journal={Neural computation},
  volume={13},
  number={7},
  pages={1443--1471},
  year={2001},
  publisher={MIT Press}
}

@inproceedings{liu2008isolation,
  title={Isolation forest},
  author={Liu, Fei Tony and Ting, Kai Ming and Zhou, Zhi-Hua},
  booktitle={2008 eighth ieee international conference on data mining},
  pages={413--422},
  year={2008},
  organization={IEEE}
}

@inproceedings{lazarevic2005feature,
  title={Feature bagging for outlier detection},
  author={Lazarevic, Aleksandar and Kumar, Vipin},
  booktitle={Proceedings of the eleventh ACM SIGKDD international conference on Knowledge discovery in data mining},
  pages={157--166},
  year={2005}
}

@article{pevny2016loda,
  title={Loda: Lightweight on-line detector of anomalies},
  author={Pevn{\`y}, Tom{\'a}{\v{s}}},
  journal={Machine Learning},
  volume={102},
  number={2},
  pages={275--304},
  year={2016},
  publisher={Springer}
}

@article{goldstein2012histogram,
  title={Histogram-based outlier score (hbos): A fast unsupervised anomaly detection algorithm},
  author={Goldstein, Markus and Dengel, Andreas},
  journal={KI-2012: poster and demo track},
  volume={1},
  pages={59--63},
  year={2012},
  publisher={Saarbruecken, Germany}
}

@inproceedings{li2020copod,
  title={Copod: copula-based outlier detection},
  author={Li, Zheng and Zhao, Yue and Botta, Nicola and Ionescu, Cezar and Hu, Xiyang},
  booktitle={2020 IEEE international conference on data mining (ICDM)},
  pages={1118--1123},
  year={2020},
  organization={IEEE}
}

@article{li2022ecod,
  title={Ecod: Unsupervised outlier detection using empirical cumulative distribution functions},
  author={Li, Zheng and Zhao, Yue and Hu, Xiyang and Botta, Nicola and Ionescu, Cezar and Chen, George H},
  journal={IEEE Transactions on Knowledge and Data Engineering},
  volume={35},
  number={12},
  pages={12181--12193},
  year={2022},
  publisher={IEEE}
}

@article{durani2026anomaly,
  title={Anomaly Detection by an Ensemble of Random Pairs of Hyperspheres},
  author={Durani, Walid and Leiber, Collin and Durani, Khalid and Plant, Claudia and B{\"o}hm, Christian},
  journal={Advances in Neural Information Processing Systems},
  volume={38},
  pages={166168--166207},
  year={2026}
}

@article{shyu2003novel,
  title={A novel anomaly detection scheme based on principal component classifier},
  author={Shyu, Mei-Ling and Chen, Shu-Ching and Sarinnapakorn, Kanoksri and Chang, LiWu},
  year={2003}
}

@inproceedings{kriegel2009outlier,
  title={Outlier detection in axis-parallel subspaces of high dimensional data},
  author={Kriegel, Hans-Peter and Kr{\"o}ger, Peer and Schubert, Erich and Zimek, Arthur},
  booktitle={Pacific-asia conference on knowledge discovery and data mining},
  pages={831--838},
  year={2009},
  organization={Springer}
}

@article{zimek2014ensembles,
  title={Ensembles for unsupervised outlier detection: challenges and research questions a position paper},
  author={Zimek, Arthur and Campello, Ricardo JGB and Sander, J{\"o}rg},
  journal={Acm Sigkdd Explorations Newsletter},
  volume={15},
  number={1},
  pages={11--22},
  year={2014},
  publisher={ACM New York, NY, USA}
}

@inproceedings{zong2018deep,
  title={Deep autoencoding gaussian mixture model for unsupervised anomaly detection},
  author={Zong, Bo and Song, Qi and Min, Martin Renqiang and Cheng, Wei and Lumezanu, Cristian and Cho, Daeki and Chen, Haifeng},
  booktitle={International conference on learning representations},
  year={2018}
}

@article{bergman2020classification,
  title={Classification-based anomaly detection for general data},
  author={Bergman, Liron and Hoshen, Yedid},
  journal={arXiv preprint arXiv:2005.02359},
  year={2020}
}

@inproceedings{qiu2021neural,
  title={Neural transformation learning for deep anomaly detection beyond images},
  author={Qiu, Chen and Pfrommer, Timo and Kloft, Marius and Mandt, Stephan and Rudolph, Maja},
  booktitle={International conference on machine learning},
  pages={8703--8714},
  year={2021},
  organization={PMLR}
}

@inproceedings{shenkar2022anomaly,
  title={Anomaly detection for tabular data with internal contrastive learning},
  author={Shenkar, Tom and Wolf, Lior},
  booktitle={International conference on learning representations},
  year={2022}
}

@inproceedings{xu2023fascinating,
  title={Fascinating supervisory signals and where to find them: Deep anomaly detection with scale learning},
  author={Xu, Hongzuo and Wang, Yijie and Wei, Juhui and Jian, Songlei and Li, Yizhou and Liu, Ning},
  booktitle={International Conference on Machine Learning},
  pages={38655--38673},
  year={2023},
  organization={PMLR}
}

@article{wang2019unsupervised,
  title={Unsupervised representation learning by predicting random distances},
  author={Wang, Hu and Pang, Guansong and Shen, Chunhua and Ma, Congbo},
  journal={arXiv preprint arXiv:1912.12186},
  year={2019}
}

@inproceedings{liu2021rca,
  title={Rca: A deep collaborative autoencoder approach for anomaly detection},
  author={Liu, Boyang and Wang, Ding and Lin, Kaixiang and Tan, Pang-Ning and Zhou, Jiayu},
  booktitle={IJCAI: proceedings of the conference},
  volume={2021},
  pages={1505},
  year={2021}
}

@inproceedings{ramaswamy2000efficient,
  title={Efficient algorithms for mining outliers from large data sets},
  author={Ramaswamy, Sridhar and Rastogi, Rajeev and Shim, Kyuseok},
  booktitle={Proceedings of the 2000 ACM SIGMOD international conference on Management of data},
  pages={427--438},
  year={2000}
}

@inproceedings{pang2018learning,
  title={Learning representations of ultrahigh-dimensional data for random distance-based outlier detection},
  author={Pang, Guansong and Cao, Longbing and Chen, Ling and Liu, Huan},
  booktitle={Proceedings of the 24th ACM SIGKDD international conference on knowledge discovery \& data mining},
  pages={2041--2050},
  year={2018}
}

@article{dasgupta2003elementary,
  title={An elementary proof of a theorem of Johnson and Lindenstrauss},
  author={Dasgupta, Sanjoy and Gupta, Anupam},
  journal={Random Structures \& Algorithms},
  volume={22},
  number={1},
  pages={60--65},
  year={2003},
  publisher={Wiley Online Library}
}

@article{johnson1984extensions,
  title={Extensions of Lipschitz mappings into a Hilbert space},
  author={Johnson, William B and Lindenstrauss, Joram and others},
  journal={Contemporary mathematics},
  volume={26},
  number={189-206},
  pages={1},
  year={1984}
}

@article{rosenblatt1956remarks,
  title={Remarks on Some Nonparametric Estimates of a Density Function},
  author={Rosenblatt, Murray},
  journal={The Annals of Mathematical Statistics},
  pages={832--837},
  year={1956},
  publisher={JSTOR}
}

@article{parzen1962estimation,
  title={On estimation of a probability density function and mode},
  author={Parzen, Emanuel},
  journal={The annals of mathematical statistics},
  volume={33},
  number={3},
  pages={1065--1076},
  year={1962},
  publisher={JSTOR}
}

@article{rahimi2007random,
  title={Random features for large-scale kernel machines},
  author={Rahimi, Ali and Recht, Benjamin},
  journal={Advances in neural information processing systems},
  volume={20},
  year={2007}
}

@article{muandet2016kernel,
  title={Kernel Mean Embedding of Distributions: A Review and Beyond},
  author={Muandet, Krikamol and Fukumizu, Kenji and Sriperumbudur, Bharath and Sch{\"o}lkopf, Bernhard},
  journal={arXiv preprint arXiv:1605.09522},
  year={2016}
}

@inproceedings{yang2014quasi,
  title={Quasi-Monte Carlo feature maps for shift-invariant kernels},
  author={Yang, Jiyan and Sindhwani, Vikas and Avron, Haim and Mahoney, Michael},
  booktitle={International Conference on Machine Learning},
  pages={485--493},
  year={2014},
  organization={PMLR}
}

@article{hoeffding1963probability,
  title={Probability inequalities for sums of bounded random variables},
  author={Hoeffding, Wassily},
  journal={Journal of the American statistical association},
  volume={58},
  number={301},
  pages={13--30},
  year={1963},
  publisher={Taylor \& Francis}
}

@article{serfling1974probability,
  title={Probability inequalities for the sum in sampling without replacement},
  author={Serfling, Robert J},
  journal={The Annals of Statistics},
  pages={39--48},
  year={1974},
  publisher={JSTOR}
}

@article{sutherland2015error,
  title={On the error of random Fourier features},
  author={Sutherland, Danica J and Schneider, Jeff},
  journal={arXiv preprint arXiv:1506.02785},
  year={2015}
}

@book{silverman2018density,
  title={Density estimation for statistics and data analysis},
  author={Silverman, Bernard W},
  year={2018},
  publisher={Routledge}
}

@inproceedings{aggarwal2001outlier,
  title={Outlier detection for high dimensional data},
  author={Aggarwal, Charu C and Yu, Philip S},
  booktitle={Proceedings of the 2001 ACM SIGMOD international conference on Management of data},
  pages={37--46},
  year={2001}
}

@article{han2022adbench,
  title={Adbench: Anomaly detection benchmark},
  author={Han, Songqiao and Hu, Xiyang and Huang, Hailiang and Jiang, Minqi and Zhao, Yue},
  journal={Advances in neural information processing systems},
  volume={35},
  pages={32142--32159},
  year={2022}
}

@article{jiang2023adgym,
  title={Adgym: Design choices for deep anomaly detection},
  author={Jiang, Minqi and Hou, Chaochuan and Zheng, Ao and Han, Songqiao and Huang, Hailiang and Wen, Qingsong and Hu, Xiyang and Zhao, Yue},
  journal={Advances in Neural Information Processing Systems},
  volume={36},
  pages={70179--70207},
  year={2023}
}

@inproceedings{chen2025pyod,
  title={Pyod 2: A python library for outlier detection with llm-powered model selection},
  author={Chen, Sihan and Qian, Zhuangzhuang and Siu, Wingchun and Hu, Xingcan and Li, Jiaqi and Li, Shawn and Qin, Yuehan and Yang, Tiankai and Xiao, Zhuo and Ye, Wanghao and others},
  booktitle={Companion Proceedings of the ACM on Web Conference 2025},
  pages={2807--2810},
  year={2025}
}

@article{chandola2009anomaly,
  title={Anomaly detection: A survey},
  author={Chandola, Varun and Banerjee, Arindam and Kumar, Vipin},
  journal={ACM computing surveys (CSUR)},
  volume={41},
  number={3},
  pages={1--58},
  year={2009},
  publisher={ACM New York, NY, USA}
}

@article{ruff2021unifying,
  title={A unifying review of deep and shallow anomaly detection},
  author={Ruff, Lukas and Kauffmann, Jacob R and Vandermeulen, Robert A and Montavon, Gr{\'e}goire and Samek, Wojciech and Kloft, Marius and Dietterich, Thomas G and M{\"u}ller, Klaus-Robert},
  journal={Proceedings of the IEEE},
  volume={109},
  number={5},
  pages={756--795},
  year={2021},
  publisher={IEEE}
}

@inproceedings{ruff2018deep,
  title={Deep one-class classification},
  author={Ruff, Lukas and Vandermeulen, Robert and Goernitz, Nico and Deecke, Lucas and Siddiqui, Shoaib Ahmed and Binder, Alexander and M{\"u}ller, Emmanuel and Kloft, Marius},
  booktitle={International conference on machine learning},
  pages={4393--4402},
  year={2018},
  organization={PMLR}
}

@article{pang2021deep,
  title={Deep learning for anomaly detection: A review},
  author={Pang, Guansong and Shen, Chunhua and Cao, Longbing and Hengel, Anton Van Den},
  journal={ACM computing surveys (CSUR)},
  volume={54},
  number={2},
  pages={1--38},
  year={2021},
  publisher={ACM New York, NY, USA}
}

@ARTICLE{xu2023deep,
   author={Xu, Hongzuo and Pang, Guansong and Wang, Yijie and Wang, Yongjun},
   journal={IEEE Transactions on Knowledge and Data Engineering},
   title={Deep Isolation Forest for Anomaly Detection},
   year={2023},
   volume={35},
   number={12},
   pages={12591--12604},
   doi={10.1109/TKDE.2023.3270293}
}

@ARTICLE{xu2024calibrated,
   title={Calibrated one-class classification for unsupervised time series anomaly detection},
   author={Xu, Hongzuo and Wang, Yijie and Jian, Songlei and Liao, Qing and Wang, Yongjun and Pang, Guansong},
   journal={IEEE Transactions on Knowledge and Data Engineering},
   year={2024},
   publisher={IEEE}
}

@article{chatterjee2022iot,
  title={IoT anomaly detection methods and applications: A survey},
  author={Chatterjee, Ayan and Ahmed, Bestoun S},
  journal={Internet of Things},
  volume={19},
  pages={100568},
  year={2022},
  publisher={Elsevier}
}
\bibliographystyle{iclr2027_conference}

\newpage
\appendix

The appendix here provides additional details for the submission titled: “RGLD: Randomized Global-Local Density Estimation for Tabular Anomaly Detection”. The appendix is organized as follows:

{\setstretch{2}
\begin{enumerate}[label=\Alph*.]
    \large
    \item \hyperref[app:notation]{\textbf{List of Notation}}
    \item \hyperref[app:discussion]{\textbf{Discussion}}
    \item \hyperref[app:limitations]{\textbf{Limitations}}
    \item \hyperref[app:pseudocode]{\textbf{Full RGLD Pseudocode}}
    \item \hyperref[app:hyperparameter]{\textbf{Hyperparameters for Two RGLD Variants}}
    \item \hyperref[app:prop1]{\textbf{Proof of Proposition~\ref{prop:feature-bagging-contrast}}}
    \item \hyperref[app:prop1]{\textbf{Proof of Proposition~\ref{prop:rf_kde}}}
    \item \hyperref[app:prop2]{\textbf{Proof of Proposition~\ref{prop:kNN_reference}}}
    \item \hyperref[app:scaling]{\textbf{Scaling in Dataset Size and Dataset Dimension}}
    \item \hyperref[app:aderh_comparison]{\textbf{Detailed Comparison with ADERH}}
    \item \hyperref[app:frontier]{\textbf{Full Pareto Frontier Graph}}
    \item \hyperref[app:dataset]{\textbf{Dataset Detail}}
    \item \hyperref[app:benchmark]{\textbf{Full Benchmark Results}}
    \item \hyperref[app:code]{\textbf{Code Availability}}
\end{enumerate}
}
\newpage

\section{List of Notation}
\label{app:notation}

We hereby provide a list of notations used in this paper:

\begin{table*}[h]
\caption{List of notations.}
\begin{center}
\small
\setlength{\tabcolsep}{6pt}
\begin{tabular}{p{10em} || p{32em}} 
 \toprule
 \toprule
 Symbol & Meaning \\
 \midrule
 \centering \(X=\{x_i\}_{i=1}^{n}\) & Unlabeled dataset with \(n\) samples. \\
 \centering \(d\) & Ambient data dimension. \\
 \centering \(s(x)\) & Final RGLD anomaly score; larger values indicate more anomalous samples. \\
 \centering \(S_t\) & Feature subset sampled for randomized view \(t\). \\
 \centering $r$ & Projected dimension after random projection. \\
 \centering \(R_t\) & Random projection matrix for view \(t\). \\
 \centering \(z_t(x)=x_{S_t}R_t\) & Projected representation of \(x\) in randomized view \(t\). \\
 \centering \(T_g, T_\ell\) & Number of global density views and projected-neighbor views. \\
 \centering \(\sigma_{t,m}\) & Bandwidth for global view \(t\) and scale \(m\). \\
 \centering \(D\) & Random-feature dimension. \\
 \centering \(\phi_{t,\sigma}(z)\) & Random-feature map for projected sample \(z\). \\
 \centering \(\mu_{t,\sigma}\) & Empirical random-feature mean embedding. \\
 \centering \(\widehat{p}_{t,\sigma}(x)\), \(\widetilde{p}_{t,\sigma}(x)\) & Exact projected KDE score and its random-feature approximation. \\
 \centering \(s_{\mathrm{glob}}(x)\) & Rank-aggregated global density branch score. \\
 \centering \(C_t\) & Sampled reference set for projected-neighbor view \(t\). \\
 \centering \(k\) & Number of nearest reference points. \\
 \centering \(a_t^{\mathrm{loc}}(x)\) & Projected-neighbor local isolation score. \\
 \centering \(s_{\mathrm{loc}}(x)\) & Rank-aggregated local neighbor branch score. \\
 \centering \(\operatorname{RankAgg}(\cdot)\) & Rank-based aggregation operator. \\
 \centering \(B_t(x,\rho)\), \(q_t(x,\rho)\) & Projected radius-\(\rho\) neighborhood of \(x\), and its empirical mass. \\
 \bottomrule
 \bottomrule
\end{tabular}
\label{notation}
\end{center}
\end{table*}

\section{Discussion}
\label{app:discussion}

\textbf{Practical Guide for Hyperparameter Selection: }RGLD's hyperparameters mainly control the tradeoff between approximation fidelity, ensemble diversity, and runtime. The random-feature dimension $D$ controls the global density approximation: by Proposition~\ref{prop:rf_kde}, the random-feature KDE error decreases as $O(D^{-1/2})$, so larger $D$ improves approximation quality with diminishing returns. In our experiments, a modest value such as 10 was sufficient to provide a strong accuracy-efficiency tradeoff. The reference set size $|C_t|$ controls the local-support approximation: by Proposition~\ref{prop:kNN_reference}, the probability of missing a radius-$r$ neighborhood is at most $\exp(-|C_t|q_t(x,\rho))$, so larger $|C_t|$ improves local coverage exponentially while increasing neighbor-scoring cost linearly. Thus, within a computational budget, larger $D$ and $|C_t|$ generally improve the fidelity of the global and local branch approximations. The number of global and local views, $T_g$ and $T_\ell$, increases ensemble diversity and the chance of observing informative views, with runtime growing approximately linearly in the number of views. The bandwidth multipliers $\alpha_m$ define the density scales used by the global branch; a small grid such as $\{0.25,0.5,1.0,2.0\}$ covers both fine and coarse anomaly scales without high cost. Finally, the number of neighbors $k$ controls the locality of the projected-neighbor score, smaller $k$ emphasizes fine local isolation, while a larger $k$ gives smoother but less local support estimates.

\textbf{The Need for Random Projection: }Random projection is used in RGLD to make both scoring branches cheaper while retaining useful geometric information. By the Johnson-Lindenstrauss principle, random projections can approximately preserve pairwise distances in a lower-dimensional space~\citep{johnson1984extensions}, which is important because both density and neighbor-based anomaly scores depend on this geometry. In the global branch, projecting the data reduces the cost of random-feature density estimation and helps avoid directly applying KDE-style scoring in the original high-dimensional space, where density estimates are sensitive to dimension and bandwidth choice~\citep{silverman2018density}. In the local branch, projection reduces the cost of neighbor-distance computation by replacing full-dimensional distances with low-dimensional projected distances. RGLD therefore uses random projection as a lightweight mechanism for producing inexpensive density and neighborhood estimates across multiple randomized views.

\section{Limitations}
\label{app:limitations}

\textbf{Potential Failure Mode: }RGLD relies on the assumption that useful anomaly evidence can be exposed through randomized views of the observed tabular feature space. This makes the method efficient and fully unsupervised, but it may be less effective when anomaly signals are extremely weak, highly diffuse across many features, or only visible through complex nonlinear representations. Feature bagging is most helpful when some subsets of features provide stronger anomaly contrast than the full space; when the raw features do not contain meaningful density or neighborhood structure, representation-learning methods or domain-specific features may be more suitable.

\textbf{Anomaly Signals: }RGLD focuses on scalable approximations to density-based and neighborhood-based anomaly signals. This covers a broad and effective class of unsupervised detectors, but abnormality can also appear 
through other mechanisms. For example, anomalies may be characterized by reconstruction error, violation of feature dependencies, abnormal marginal tail behavior, or inconsistency with learned semantic representations. RGLD's randomized ensemble framework is compatible with these signals in principle, but this paper focuses on global density and local neighbor support. Future work could extend RGLD by incorporating additional scalable score approximations into the same randomized rank-aggregation framework.

\textbf{Extension to Broader Data Modalities or Settings: }Our experiments focus on static tabular anomaly detection. Extending RGLD to time series, graphs, images, streaming data, or semi-supervised settings would require additional design choices, such as temporal modeling, graph-aware decisions, and online updates. We view these directions as complementary to the present work and as promising paths toward more broadly applicable, scalable anomaly detection.

\section{RGLD Pseudocode}
\label{app:pseudocode}

Algorithm~\ref{alg:RGLD} summarizes RGLD. The global branch fits randomized density estimators using random-feature mean embeddings, while the local neighbor branch fits sampled-reference local isolation estimators. Scores are rank-aggregated within each branch and then across branches.

\begin{algorithm}[H]
\caption{\textsc{RGLD}}
\label{alg:RGLD}
\KwIn{Unlabeled data \(X=\{x_i\}_{i=1}^{n}\); global views \(T_g\); local views \(T_\ell\); bandwidth multipliers \(\{\alpha_m\}_{m=1}^{M}\); reference size \(c\); neighbors \(k\); Random feature dimension $D$}
\KwOut{Anomaly scores \(s(x_i)\)}

Preprocess \(X\) with dataset statistics;

\BlankLine
\textbf{Global density branch}\;
\For{\(t=1,\ldots,T_g\)}{
    Sample feature subset and random projection; obtain projected data \(Z_t\)\;
    Estimate base bandwidth \(\widehat{\sigma}_t\) from sampled pairwise distances in \(Z_t\)\;
    Construct multi-scale bandwidths \(\sigma_{t,m}=\alpha_m\widehat{\sigma}_t\)\;
    Fit random-feature mean embeddings \(\{\mu_{t,m}\}_{m=1}^{M}\)\;
    Score samples by negative random-feature density at each scale\;
    Standardize and average multi-scale scores to obtain \(a_t^{\mathrm{glob}}\)\;
}
Aggregate global scores: \(s_{\mathrm{glob}}=\operatorname{RankAgg}(a_1^{\mathrm{glob}},\ldots,a_{T_g}^{\mathrm{glob}})\)\;

\BlankLine
\textbf{local neighbor branch}\;
\For{\(t=1,\ldots,T_\ell\)}{
    Sample feature subset and random projection; obtain projected data \(Z_t^\ell\)\;
    Sample reference set \(C_t\subset Z_t^\ell\) with \(|C_t|=c\)\;
    Score each sample by its average distance to the \(k\) nearest points in \(C_t\)\;
    Store the resulting local score vector \(a_t^{\mathrm{loc}}\)\;
}
Aggregate local scores: \(s_{\mathrm{loc}}=\operatorname{RankAgg}(a_1^{\mathrm{loc}},\ldots,a_{T_\ell}^{\mathrm{loc}})\)\;

\BlankLine
\textbf{Final score}\;
Return \(s=\operatorname{RankAgg}(s_{\mathrm{glob}},s_{\mathrm{loc}})\)\;
\end{algorithm}

\textbf{Label-free preprocessing selection: }RGLD selects preprocessing using only training-set statistics. It chooses among standard scaling, robust scaling, and quantile transformation based on scale heterogeneity, tail heaviness, skewness, sparsity, and discreteness. Robust scaling is used as the safe default when feature geometry may be distorted by outliers or scale mismatch; quantile transformation is used for strongly skewed continuous data; otherwise, standard scaling is used.

\paragraph{Automatic projection dimension: }RGLD uses a fixed rule for random projection dimension based only on input dimension \(d\): \(r=2\) for \(d\leq3\), \(r=5\) for \(d\leq20\), \(r=8\) for \(d\leq40\), \(r=12\) for \(d\leq100\), and \(r=16\) otherwise. The local neighbor branch uses the same rule.

\section{Hyperparameters for Two RGLD Variants}
\label{app:hyperparameter}

We use two RGLD variants in the experiments: \textbf{RGLD} and \textbf{RGLD-Fast}. RGLD is the default configuration used for the main benchmark, while RGLD-Fast reduces the number of randomized views, random-feature dimension, reference size to improve runtime. The hyperparameters used for each variant are shown in Table~\ref{tab:RGLD_hyperparameters}.
\begin{table}[h]
\caption{
\textbf{Hyperparameters for RGLD and RGLD-Fast.}
RGLD-Fast keeps the same design as RGLD but reduces the ensemble and random-feature budgets for lower runtime.
}
\centering
\setlength{\tabcolsep}{5pt}
\begin{tabular}{l c c}
\toprule
Hyperparameter & RGLD & RGLD-Fast \\
\midrule
Random-feature dimension $D$ & 10 & 5 \\
Bandwidth multipliers $\alpha_m$ & $\{0.25,0.5,1.0,2.0\}$ & $\{0.25,0.5,1.0,2.0\}$ \\
Global randomized views $T_g$ & 40 & 20 \\
Projected-neighbor views $T_\ell$ & 10 & 5 \\
Reference set size $|C_t|$ & 400 & 200 \\
Number of neighbors $k$ & 10 & 10 \\
\bottomrule
\end{tabular}
\label{tab:RGLD_hyperparameters}
\end{table}

For both variants, each randomized view samples approximately $30\%$ of the input features, subject to a minimum of $3$ selected features, before applying a random projection. For each global randomized view, the base bandwidth is estimated as the median Euclidean distance over 8192 sampled pairs in the projected space, and the final set of bandwidths is obtained by multiplying this view-specific base bandwidth by the fixed scale multipliers.

\section{Proof of Proposition~\ref{prop:feature-bagging-contrast}}
\label{app:prop1}

Let $S^\star$ be the informative feature set and $S$ be a feature subset with $|S|=m$.
By definition,
$$
\pi(S)=\frac{|S\cap S^\star|}{m}
$$
The expected squared anomaly-normal separation in $S$ decomposes into informative and nuisance
coordinates:
$$
\mathbb{E}\|(x_a-x_n)_S\|_2^2
=
|S\cap S^\star|\mu_{\mathrm{sig}}
+
|S\setminus S^\star|\mu_{\mathrm{noise}}
$$
Dividing by $m$ gives
$$
C(S)
=
\frac{|S\cap S^\star|}{m}\mu_{\mathrm{sig}}
+
\left(1-\frac{|S\cap S^\star|}{m}\right)\mu_{\mathrm{noise}}
=
\pi(S)\mu_{\mathrm{sig}}+(1-\pi(S))\mu_{\mathrm{noise}}.
$$
For the full feature space $[d]$, the informative-feature fraction is $\pi([d])=s/d$, so
$$
C([d])
=
\frac{s}{d}\mu_{\mathrm{sig}}
+
\left(1-\frac{s}{d}\right)\mu_{\mathrm{noise}}
$$
Therefore,
$$
C(S)-C([d])
=
\left(\pi(S)-\frac{s}{d}\right)
(\mu_{\mathrm{sig}}-\mu_{\mathrm{noise}})
$$
Since $\mu_{\mathrm{sig}}>\mu_{\mathrm{noise}}$, if $\pi(S)>s/d$, then
$$
C(S)-C([d])>0
$$
and hence $C(S)>C([d])$.

Finally, since $\pi(S)=|S\cap S^\star|/m$, we have
$$
\pi(S)>\frac{s}{d}
\quad \Longleftrightarrow \quad
\frac{|S\cap S^\star|}{m}>\frac{s}{d}
\quad \Longleftrightarrow \quad
|S\cap S^\star|>\frac{ms}{d}
$$
Thus,
$$
\Pr\left(\pi(S)>\frac{s}{d}\right)
=
\Pr\left(|S\cap S^\star|>\frac{ms}{d}\right)
$$

\begin{flushright} $\blacksquare$ \end{flushright}

When the feature subset $S$ is sampled uniformly without replacement from $[d]$, the random variable $J=|S\cap S^\star|$ follows a hypergeometric distribution. Therefore, the probability that a randomized feature subset is enriched relative to the full feature space is:
$$
p_{\mathrm{enrich}} = \Pr\left(\pi(S)>\frac{s}{d}\right)
=
\Pr\left(J>\frac{ms}{d}\right)
=
\sum_{j>\frac{ms}{d}}
\frac{\binom{s}{j}\binom{d-s}{m-j}}{\binom{d}{m}}
$$
This quantity characterizes how often randomized views are expected to have higher informative feature concentration than the full space. Across $T$ independently sampled views, the expected number of enriched views is $T p_{\mathrm{enrich}}$, and the observed fraction concentrates around
$p_{\mathrm{enrich}}$ by standard concentration inequalities. Thus, when $p_{\mathrm{enrich}}$ is
non-negligible, RGLD can include a meaningful fraction of feature-bagged detectors with stronger
subspace contrast, allowing these views to contribute consistently to the rank-aggregated anomaly
score.

\section{Proof of Proposition~\ref{prop:rf_kde}}
\label{app:prop2}

We prove the result for a fixed randomized view $t$ and bandwidth $\sigma$. Conditioning on the sampled feature subset and projection, the projected samples $z_i^{(t)}$ are fixed. Thus, the only randomness comes from the random features used to approximate the Gaussian kernel.

For simplicity, write $z=z_t(x)$ and $z_i=z_i^{(t)}$. The exact projected KDE score is
$$
    \widehat{p}_{t,\sigma}(x)
    =
    \frac{1}{N}\sum_{i=1}^{N} k_{\sigma}(z,z_i)
$$
Using cosine random features, the RGLD approximation can be written as
$$
    \widetilde{p}_{t,\sigma}(x)
    =
    \frac{1}{D}
    \sum_{\ell=1}^{D}
    h_\ell(x)
$$
where
$$
    h_\ell(x)
    =
    \frac{2}{N}
    \sum_{i=1}^{N}
    \cos\left(\frac{\omega_\ell^\top z}{\sigma}+b_\ell\right)
    \cos\left(\frac{\omega_\ell^\top z_i}{\sigma}+b_\ell\right)
$$
By the standard random Fourier feature theory, each feature gives an unbiased estimate of the Gaussian kernel, so
$$
    \mathbb{E}[h_\ell(x)]
    =
    \frac{1}{N}
    \sum_{i=1}^{N}
    k_{\sigma}(z,z_i)
    =
    \widehat{p}_{t,\sigma}(x)
$$
Therefore, $\widetilde{p}_{t,\sigma}(x)$ is an unbiased estimator of $\widehat{p}_{t,\sigma}(x)$.

It remains to bound its deviation. Since each cosine term lies in $[-1,1]$, we have $h_\ell(x)\in[-2,2]$. The variables $\{h_\ell(x)\}_{\ell=1}^{D}$ are independent across random features. Applying Hoeffding's inequality to the average $\frac{1}{D}\sum_{\ell=1}^{D} h_\ell(x)$, for any $\epsilon>0$,
$$
    \Pr\left(
    \left|
    \widetilde{p}_{t,\sigma}(x)
    -
    \widehat{p}_{t,\sigma}(x)
    \right|
    \ge \epsilon
    \right)
    \le
    2\exp\left(-\frac{D\epsilon^2}{8}\right)
$$
Setting the right-hand side equal to $\delta$ gives, with probability at least $1-\delta$,
$$
    \left|
    \widetilde{p}_{t,\sigma}(x)
    -
    \widehat{p}_{t,\sigma}(x)
    \right|
    \le
    \sqrt{\frac{8\log(2/\delta)}{D}}
    =
    O\left(
    \sqrt{\frac{\log(1/\delta)}{D}}
    \right)
$$

\begin{flushright} $\blacksquare$ \end{flushright}

This proves the stated bound. The proposition is stated for original cosine random Fourier features; in our implementation, QMC features are used as a variance-reduced alternative, while the bound serves as the standard RFF approximation upper bound.

\section{Proof of Proposition~\ref{prop:kNN_reference}}
\label{app:prop3}

Fix a projected view $t$, an evaluation point $x$, and radius $\rho>0$. Let 
$B_t(x,\rho)$ be the set of projected training samples within distance $\rho$ of $z_t(x)$, and let $b=|B_t(x,\rho)|$. By definition, $q_t(x,\rho)=b/n$. Let $m=|C_t|$ be the sampled reference-set size.

Since $C_t$ is sampled uniformly without replacement from the $n$ projected training samples, the probability that it contains no point from $B_t(x,\rho)$ is
$$
    \Pr\left[
    C_t \cap B_t(x,\rho)=\emptyset
    \right]
    =
    \frac{\binom{n-b}{m}}{\binom{n}{m}}
$$
If $m>n-b$, the probability is $0$, and the desired bound holds trivially. Assuming $m\le n-b$, expanding the ratio gives
$$
    \frac{\binom{n-b}{m}}{\binom{n}{m}}
    =
    \prod_{j=0}^{m-1}
    \frac{n-b-j}{n-j}
$$
For each $j\ge 0$, we have
$$
    \frac{n-b-j}{n-j}
    \le
    \frac{n-b}{n}
    =
    1-\frac{b}{n}
    =
    1-q_t(x,\rho)
$$
Therefore,
$$
    \Pr\left[
    C_t \cap B_t(x,\rho)=\emptyset
    \right]
    \le
    \left(1-q_t(x,\rho)\right)^m
$$
Using $1-u\le e^{-u}$ for $u\in[0,1]$, we obtain
$$
    \Pr\left[
    C_t \cap B_t(x,\rho)=\emptyset
    \right]
    \le
    \exp\left(-m q_t(x,\rho)\right)
    =
    \exp\left(-|C_t|q_t(x,\rho)\right)
$$
Thus, with probability at least $1-\exp(-|C_t|q_t(x,\rho))$, the reference set contains at least one projected training sample within distance $\rho$ of $z_t(x)$. This proves the proposition.

\begin{flushright} $\blacksquare$ \end{flushright}

Proposition~\ref{prop:kNN_reference} characterizes the sampling gap introduced by replacing full-reference $k$NN with sampled-reference $k$NN. In the full $k$NN score, a point $x$ can use all training samples to find nearby support. In RGLD, $x$ only searches within the sampled reference set $C_t$. The bound shows that if the projected neighborhood $B_t(x,\rho)$ contains non-negligible empirical mass $q_t(x,\rho)$, then $C_t$ is unlikely to miss this neighborhood; the miss probability decays exponentially in $|C_t|q_t(x,\rho)$. Thus, when $x$ is well supported in a projected view, the sampled-reference score is likely to find nearby points and remain small. Conversely, if the sampled-reference score is large, this suggests either that $x$ has genuinely weak local support in that view or that the reference sample missed a very small neighborhood. RGLD mitigates the latter failure mode by aggregating multiple independently sampled projected-neighbor views and reference sets. Therefore, the proposition justifies sampled-reference $k$NN as a scalable approximation to local support.

\section{Scaling in Dataset Size and Dataset Dimension}
\label{app:scaling}

\begin{figure}[H]
    \centering
    \includegraphics[width=1\linewidth]{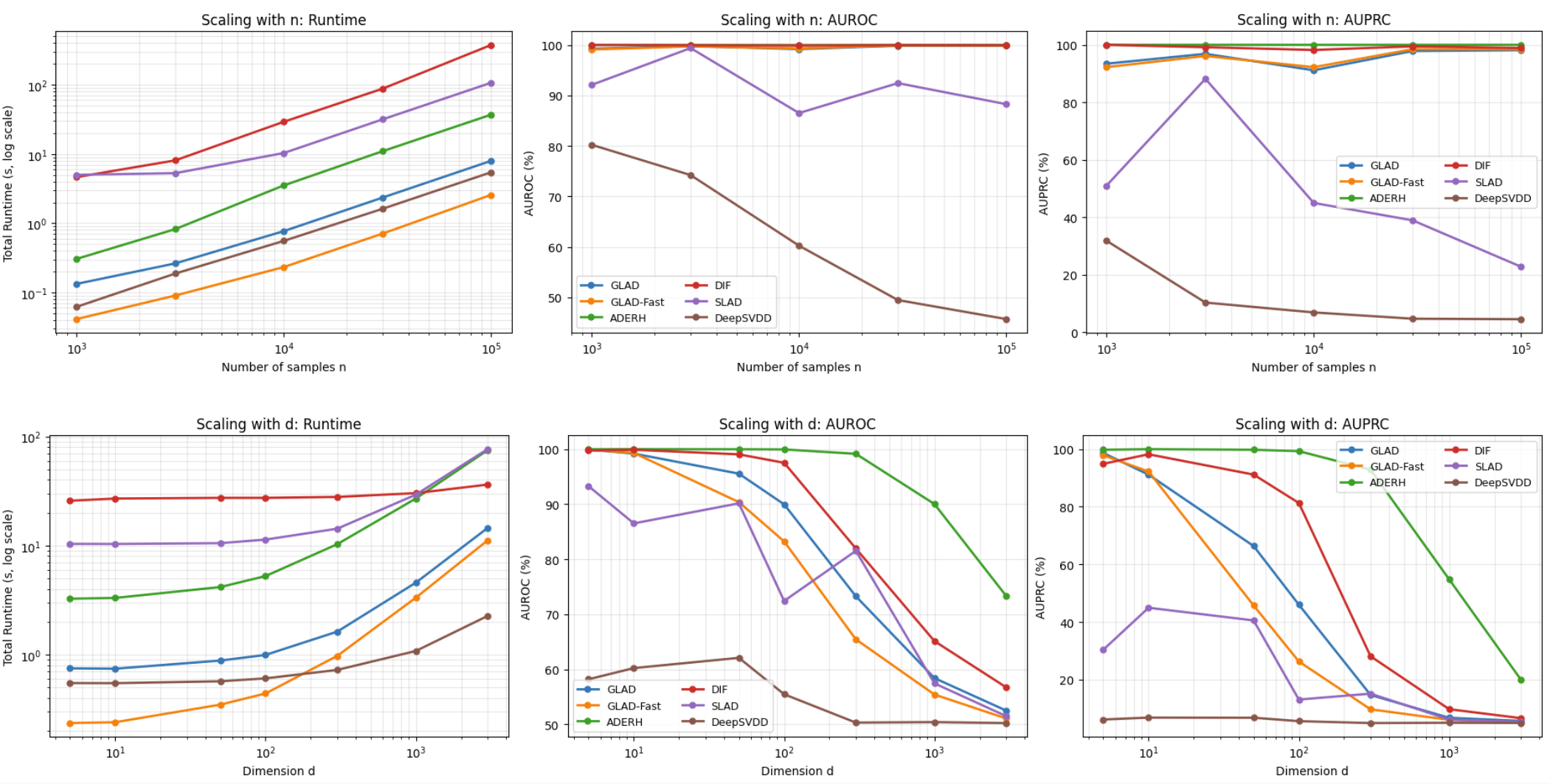}
    \caption{
        \textbf{Scaling with dataset size and ambient dimension.}
        Top: runtime, AUROC, and AUPRC as the number of samples $n$ increases. Bottom: runtime, AUROC, and AUPRC as the ambient dimension $d$ increases. RGLD and RGLD-Fast maintain strong performance and smooth runtime growth with $n$. As $d$ becomes very large, accuracy decreases due to dilution of sparse anomaly signals by nuisance dimensions.
        }
        \label{fig:scaling_nd}
\end{figure}

We further evaluate how RGLD scales with the number of samples $n$ and the ambient dimension $d$ using controlled synthetic data. We use controlled synthetic data to isolate scaling behavior. Normal samples are drawn from a Gaussian mixture, while anomalies follow a local-shift mechanism where they are sampled from the same cluster structure and then shifted along a small randomly chosen subset of informative dimensions. We vary either the number of samples $n$ or the ambient dimension $d$, keeping the anomaly ratio and shift magnitude fixed. In the dimension-scaling experiment, the anomaly signal remains confined to a small number of informative features as $d$ grows, making detection increasingly challenging. The goal of this experiment is to complement the main benchmark by isolating runtime and accuracy trends under increasing dataset size and dimensionality.

Figure~\ref{fig:scaling_nd} shows the results. When scaling the number of samples $n$, RGLD and RGLD-Fast maintain high AUROC and AUPRC across the full range while runtime grows smoothly. This behavior is consistent with the complexity analysis in Section~\ref{sec:analysis}: RGLD avoids explicit pairwise KDE and full-reference $k$NN computations, replacing them with randomized density and sampled-reference local-support estimates. RGLD-Fast further reduces runtime while preserving similar accuracy, illustrating the controllable accuracy--efficiency tradeoff of the randomized-view design.

When scaling the ambient dimension $d$, runtime increases for all methods, but RGLD remains efficient over most of the range. Accuracy decreases at very large $d$, especially in AUPRC, reflecting the increasing difficulty of detecting sparse anomaly signals as they are diluted by many nuisance dimensions. This stress test highlights both the benefit and limitation of randomized feature-bagged views: they improve efficiency and can expose subspace anomalies, but extremely high ambient dimension remains challenging when the informative anomaly signal occupies only a small fraction of features.

Interestingly, several deep methods (DIF, DeepSVDD) exhibit flatter runtime growth than statistical methods as the ambient dimension $d$ increases. This likely reflects their use of fixed neural architectures, whose runtime is less sensitive to large input dimension. By contrast, statistical detectors often operate directly on distances, projections, or neighborhoods in the input space. Therefore, RGLD's main scaling advantage is most evident in sample size $n$, while very high-dimensional inputs suggest a direction for further optimization through sparser or more adaptive projections.

\section{Detailed Comparison with ADERH}
\label{app:aderh_comparison}

ADERH is the strongest statistical baseline in our main benchmark, achieving the highest mean AUROC and AUPRC among the evaluated methods. Since both ADERH and RGLD are randomized statistical detectors, we provide a more detailed comparison between the two methods. Table~\ref{tab:rgld_aderh_summary} summarizes their aggregate performance over the 47 ADBench datasets. ADERH obtains slightly higher mean AUROC and AUPRC. In contrast, RGLD achieves more first-place wins among all evaluated methods in both AUROC and AUPRC, while being substantially faster. In particular, RGLD runs in 0.22 seconds on average, compared with 1.06 seconds for ADERH, corresponding to an average runtime advantage of approximately 4.86x.

\begin{table}[h]
\caption{Head-to-head comparison between RGLD and ADERH over 47 ADBench datasets.}
\centering
\begin{tabular}{lcc}
\toprule
Metric & RGLD & ADERH \\
\midrule
Mean AUROC & 77.50 & 78.18 \\
Mean AUPRC & 38.59 & 39.72 \\
Average rank & 9.72 & 8.22 \\
AUROC wins among all methods & 8 & 4 \\
AUPRC wins among all methods & 5 & 3 \\
Mean runtime (s) & 0.22 & 1.06 \\
Relative runtime & 1.00x & 4.86x slower \\
\bottomrule
\end{tabular}
\label{tab:rgld_aderh_summary}
\end{table}

To assess whether the observed differences are statistically meaningful, we perform paired Wilcoxon signed-rank tests across the 47 datasets. For AUROC and AUPRC, paired differences are computed as RGLD minus ADERH, so positive values indicate that RGLD has higher detection accuracy. For runtime, paired differences are computed as ADERH runtime minus RGLD runtime, so positive values indicate that RGLD is faster. The results are shown in Table~\ref{tab:rgld_aderh_significance}.

\begin{table}[h]
\caption{Paired statistical comparison between RGLD and ADERH across 47 datasets. Positive differences favor RGLD.}
\centering
\begin{tabular}{lcccc}
\toprule
Metric & Median Diff. & Mean Diff. & $p$-value & Interpretation \\
\midrule
AUROC & -1.45 & -0.68 & 0.162 & Not significant \\
AUPRC & -1.42 & -1.13 & 0.228 & Not significant \\
Runtime (s) & 0.77 & 0.84 & $2.39 \times 10^{-9}$ & RGLD faster \\
\bottomrule
\end{tabular}
\label{tab:rgld_aderh_significance}
\end{table}

The paired tests show that ADERH has slightly higher mean AUROC and AUPRC, but the differences are not statistically significant at the conventional 0.05 level. By contrast, the runtime difference is highly significant, with RGLD being faster across the benchmark. These results support the main accuracy-efficiency claim of the paper: RGLD is statistically competitive with the strongest statistical baseline in detection accuracy, while offering substantially lower runtime.

The two methods also rely on different randomized inductive biases. ADERH builds an ensemble from randomized hypersphere-pair comparisons, whereas RGLD explicitly combines global random-feature density support with local projected-neighbor isolation across feature-bagged randomized views. Thus, RGLD provides a complementary randomized statistical approach: rather than relying on a single randomized scoring principle, it aggregates global and local density evidence across multiple views and scales.

\section{Full Pareto Frontier Graph}
\label{app:frontier}

\begin{figure}[H]
    \centering
    \includegraphics[width=0.8\linewidth]{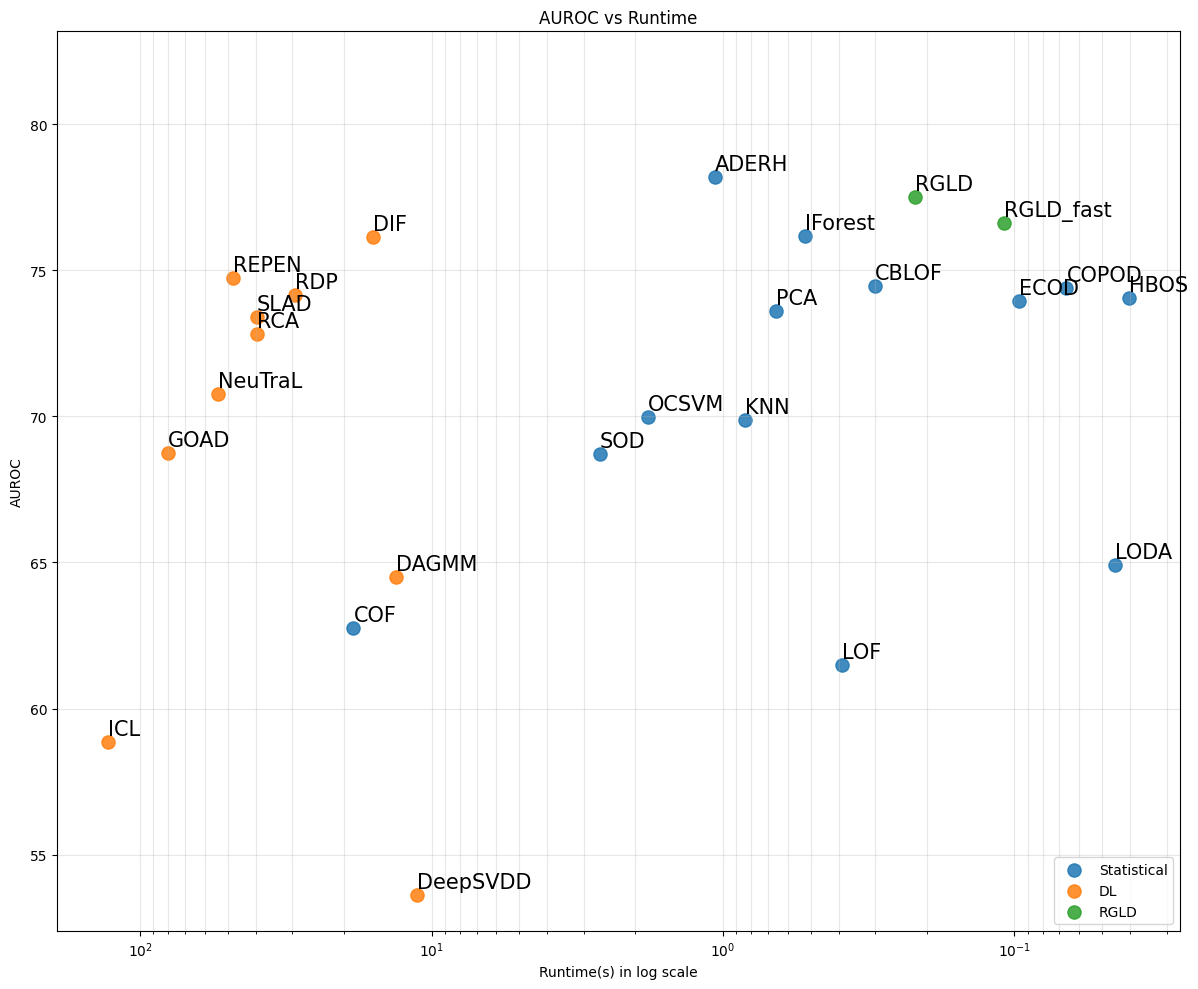}
    \caption{Full Pareto frontier graph of RGLD and 23 unsupervised anomaly detection methods using the average AUROC and time over 47 ADBench Datasets.}
\end{figure}

\begin{figure}[H]
    \centering
    \includegraphics[width=0.8\linewidth]{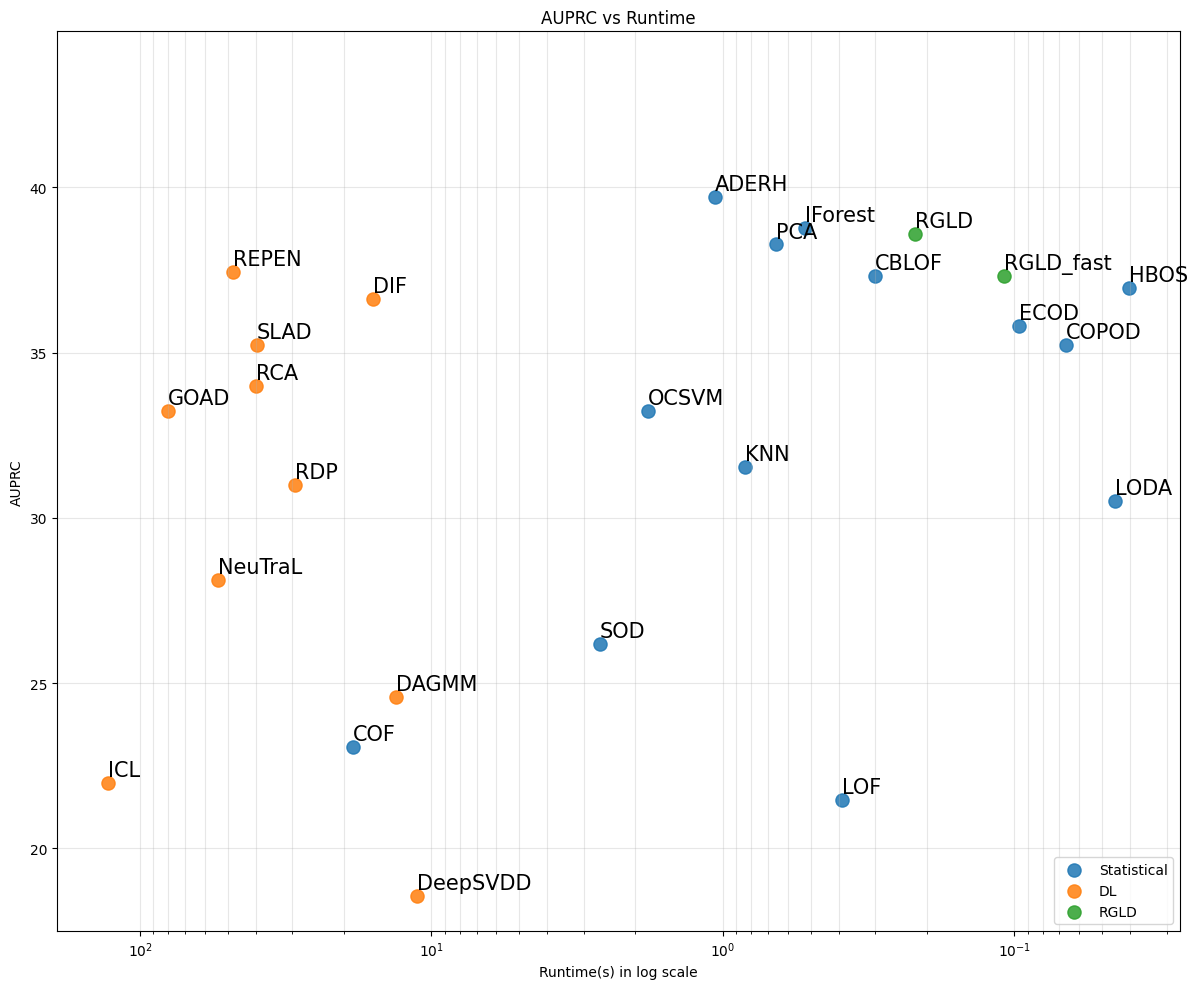}
    \caption{Full Pareto frontier graph of RGLD and 23 unsupervised anomaly detection methods using the average AUPRC and time over 47 ADBench Datasets.}
\end{figure}

\section{Dataset Detail}
\label{app:dataset}
\begin{center}
\setlength{\tabcolsep}{4pt}
\begin{tabular}{l l c c c c c c c}
\toprule
Dataset Name & Samples & \# of features & \% Anomaly  &  Category\\
\midrule
ALOI &49534& 27 & 3.04 & Image \\annthyroid &7200& 6 & 7.42& Healthcare \\backdoor & 95329 & 196 & 2.44 & Network \\breastw& 683 & 9 & 34.99 & Healthcare \\campaign &41188& 62& 11.27 & Finance
\\cardio & 1831 & 21 & 9.61 & Healthcare 
\\Cardiotocography & 2114 & 21 & 22.04 & Healthcare 
\\celeba & 202599 & 39 & 2.24 & Image 
\\census &299285 & 500 & 6.20 & Sociology 
\\cover &286048 & 10 & 0.96 & Botany 
\\donors &619326& 10& 5.93& Sociology
\\fault &1941& 27 & 34.67 & Physical 
\\fraud &284807& 29 & 0.17 & Finance
\\glass &214& 7 &4.21 & Forensic 
\\Hepatitis &80& 19& 16.25 & Healthcare
\\http &567498& 3 & 0.39 & Web 
\\InternetAds &1966& 1555 & 18.72 & Image 
\\Ionosphere &351& 32 & 35.90 & Oryctognosy 
\\landsat &6435& 36 &20.71 & Astronautics
\\letter &1600& 32& 6.25 & Image 
\\Lymphography &148& 18 & 4.05 & Healthcare 
\\magic.gamma &19020& 10 & 35.16 & Physical
\\mammography&11183& 6 &2.32 & Healthcare
\\mnist &7603& 100 & 9.21 & Image 
\\musk &3062& 166 & 3.17& Chemistry
\\optdigits &5216& 64 & 2.88 & Image 
\\PageBlocks &5393& 10 & 9.46 & Document
\\pendigits &6870& 16 & 2.27 & Image 
\\Pima &768& 8 & 34.90 & Healthcare 
\\satellite &6435& 36 & 31.64 & Astronautics
\\satimage-2 &5803& 36 & 1.22 & Astronautics
\\shuttle &49097& 9 &7.15 & Astronautics
\\skin &245057& 3 & 20.75 & Image 
\\smtp &95156&3 & 0.03 & Web 
\\SpamBase &4207& 57 & 39.91 & Document 
\\speech &	3686& 400 & 1.65 & Linguistics 
\\Stamps &340& 9 & 9.12 & 	Document 
\\thyroid &3772& 6 & 2.47 & Healthcare
\\vertebral &240& 6 & 12.50 & Biology 
\\vowels &1456& 12 & 3.43 & Linguistics 
\\Waveform &3443& 21 & 2.90 & Physics
\\WBC&223& 9 & 4.48 & Healthcare
\\WDBC &367& 30 & 2.72 & 	Healthcare
\\Wilt &4819& 5 & 5.33 & Botany
\\wine &129& 13 & 7.75 & Chemistry 
\\WPBC &198& 33& 23.74 & Healthcare
\\yeast &1484& 8 & 34.16 & Biology \\
\bottomrule
\end{tabular}
\end{center}

\section{Full Benchmark Results}
\label{app:benchmark}

This appendix reports the full per-dataset benchmark results across RGLD and 23 deep learning and statistical baselines. We report AUROC, AUPRC, and total runtime.

\begin{sidewaystable}[p]
\caption{Full benchmark AUROC results over 47 ADBench tabular datasets.}
\centering
\tiny
\setlength{\tabcolsep}{2.2pt}
\renewcommand{\arraystretch}{1.05}
\begin{adjustbox}{max width=\linewidth}
\begin{tabular}{lrrrrrrrrrrrrrrrrrrrrrrrr}
\toprule
Datasets & PCA & OCSVM & LOF & CBLOF & COF & HBOS & $k$NN & SOD & COPOD & ECOD & DAGMM & LODA & IForest & ADERH & DeepSVDD & DIF & RCA & GOAD & NeuTraL & ICL & SLAD & REPEN & RDP & RGLD \\
\midrule
ALOI & 56.65 & 55.85 & 66.63 & 55.22 & 64.68 & 52.63 & 61.47 & 61.09 & 53.75 & 56.6 & 51.96 & 51.33 & 56.66 & 54.39 & 50.29 & 56.48 & 54.38 & 57.08 & 56.71 & 52.07 & 53.67 & 57.39 & 56.99 & 51.79 \\
annthyroid & 66.25 & 57.23 & 70.2 & 62.26 & 65.92 & 60.15 & 71.69 & 77.38 & 76.8 & 78.03 & 56.53 & 41.02 & 82.01 & 63.96 & 76.62 & 66.34 & 60.55 & 57.91 & 62.98 & 37.7 & 75.88 & 65.23 & 62.2 & 93.53 \\
backdoor & 80.13 & 86.2 & 85.68 & 81.16 & 73.03 & 71.43 & 80.82 & 69.54 & 80.97 & 86.33 & 56.26 & 69.22 & 72.15 & 89.68 & 55.16 & 92.24 & 86.18 & 78.8 & 86.47 & 63.23 & 87.47 & 85.07 & 85.78 & 75.99 \\
breastw & 95.13 & 80.3 & 40.61 & 96.81 & 38.84 & 98.94 & 97.01 & 93.97 & 99.68 & 99.17 & N/A & 98.49 & 98.32 & 99.14 & 65.66 & 96.95 & 98.88 & 11.44 & 57.66 & 75.26 & 49.07 & 98.89 & 97.58 & 98.79 \\
campaign & 72.78 & 65.52 & 58.85 & 66.61 & 57.26 & 78.61 & 72.1 & 69.04 & 77.69 & 76.78 & 56.08 & 51.43 & 71.71 & 68.3 & 48.7 & 65.86 & 64.76 & 71.78 & 73.05 & 70.82 & 71.36 & 63.72 & 68.51 & 80.12 \\
cardio & 95.55 & 93.91 & 66.33 & 89.93 & 71.41 & 84.67 & 76.64 & 73.25 & 92.35 & 94.44 & 75.01 & 90.34 & 93.19 & 94.95 & 58.96 & 93.27 & 90.01 & 93.14 & 82.24 & 74.22 & 82.07 & 86.42 & 87.72 & 91.01 \\
Cardiotocography & 74.67 & 77.86 & 59.51 & 64.54 & 53.77 & 60.86 & 56.23 & 51.69 & 67.02 & 68.92 & 62.01 & 73.65 & 67.57 & 66.56 & 53.53 & 69 & 72.63 & 69.45 & 55.86 & 50.49 & 59.73 & 60.46 & 61.15 & 70.89 \\
celeba & 79.38 & 70.7 & 38.55 & 73.99 & 38.58 & 76.18 & 59.63 & 47.85 & 75.68 & 72.82 & 44.74 & 60.11 & 70.41 & 73.11 & 50.36 & 72.47 & 66.25 & 63.33 & 71.95 & 58.07 & 76.98 & 76.14 & 66.47 & 78.75 \\
census & 68.74 & 54.58 & 47.19 & 59.41 & 41.35 & 64.94 & 66.75 & 62.31 & 69.07 & 68.44 & 59.29 & 36.86 & 59.52 & 64.7 & 51.07 & 62.45 & 62.35 & 60.21 & 66.49 & 68.35 & 62.73 & 54.56 & 67.36 & 63.32 \\
cover & 93.73 & 92.62 & 84.58 & 89.3 & 76.91 & 80.24 & 85.97 & 74.46 & 88.64 & 93.42 & 89.89 & 92.34 & 86.74 & 87.23 & 46.2 & 92.63 & 81.47 & 95.17 & 92.89 & 65.98 & 91.19 & 90.32 & 94.98 & 83.22 \\
donors & 83.15 & 71.93 & 55.49 & 60.44 & 70.54 & 78.23 & 81.09 & 55.21 & 81.76 & 74.45 & 70.57 & 24.86 & 77.68 & 76.1 & 50.27 & 72.11 & 82.19 & 46.47 & 54.66 & 60.02 & 50.48 & 74.23 & 76.28 & 85.95 \\
fault & 46.02 & 47.69 & 58.93 & 64.06 & 62.1 & 51.28 & 72.98 & 68.11 & 43.88 & 43.41 & 45.86 & 41.71 & 57.02 & 70.25 & 51.67 & 67.11 & 61.6 & 52.94 & 69.69 & 68.62 & 66.63 & 65.69 & 70.53 & 49.93 \\
fraud & 90.35 & 90.62 & 94.92 & 91.7 & 93.05 & 90.29 & 93.56 & 94.97 & 88.32 & 89.85 & 89.53 & 88.99 & 90.38 & 92.08 & 64.98 & 91.3 & 86.54 & 99.69 & 98.48 & 99.71 & 99.71 & 98.98 & 99.31 & 90.06 \\
glass & 66.29 & 35.36 & 69.2 & 82.94 & 72.24 & 77.23 & 82.29 & 73.36 & 72.43 & 75.7 & 76.09 & 73.13 & 77.13 & 84.72 & 47.49 & 83.95 & 62.33 & 79.84 & 82.41 & 61.25 & 87.67 & 78.06 & 87.35 & 80.85 \\
Hepatitis & 75.95 & 67.75 & 38.02 & 66.4 & 41.45 & 79.85 & 52.76 & 68.17 & 82.05 & 79.67 & 54.8 & 64.87 & 69.75 & 74.31 & 50.96 & 67.13 & 59.48 & 72.31 & 59.14 & 26.74 & 67.38 & 45.91 & 60.66 & 77.82 \\
http & 99.72 & 99.59 & 27.46 & 99.6 & 88.78 & 99.53 & 3.37 & 78.04 & 99.29 & 98.1 & N/A & 12.48 & 99.96 & 99.52 & 69.05 & 99.59 & 99.49 & 98.99 & 98.7 & 49.98 & 99.73 & 99.06 & 99.1 & 99.99 \\
InternetAds & 61.67 & 68.28 & 65.83 & 70.58 & 63.79 & 68.03 & 69.99 & 61.85 & 67.05 & 67.1 & N/A & 55.38 & 69.01 & 66.85 & 60.2 & 68.67 & 70.45 & 46.46 & 56.04 & 53.91 & 59.09 & 67.75 & 63.84 & 70.76 \\
Ionosphere & 79.19 & 75.92 & 90.59 & 90.72 & 86.76 & 62.49 & 88.26 & 86.41 & 79.34 & 75.59 & 73.41 & 78.42 & 84.5 & 89.94 & 50.89 & 90.58 & 77.8 & 85.27 & 93.73 & 33.78 & 95.2 & 80.4 & 92.6 & 88.35 \\
landsat & 35.76 & 36.15 & 53.9 & 63.55 & 53.5 & 55.14 & 57.95 & 59.54 & 41.55 & 56.61 & 43.92 & 38.17 & 47.64 & 59.56 & 63.61 & 50.5 & 29.63 & 45.84 & 57.44 & 65.76 & 64.25 & 55.09 & 47.63 & 50.35 \\
letter & 50.29 & 46.18 & 84.49 & 75.62 & 80.03 & 59.74 & 86.19 & 84.09 & 54.32 & 50.76 & 50.42 & 50.24 & 61.07 & 72.72 & 56.64 & 68.38 & 58.74 & 52.39 & 83.86 & 72.01 & 73.39 & 67.87 & 76.88 & 65.65 \\
Lymphography & 99.82 & 99.54 & 89.86 & 99.83 & 90.85 & 99.49 & 55.91 & 72.49 & 99.48 & 99.52 & 72.11 & 85.55 & 99.81 & 99.89 & 32.29 & 99.78 & 99.85 & 92.86 & 75.94 & 64.74 & 73.95 & 100 & 81.88 & 99 \\
magic.gamma & 67.22 & 60.65 & 68.51 & 75.13 & 66.64 & 70.86 & 82.38 & 75.4 & 68.33 & 64.36 & 58.58 & 68.02 & 73.25 & 80.94 & 60.26 & 76.82 & 74.51 & 65.63 & 65.8 & 64.26 & 68.5 & 67.82 & 75.23 & 69.04 \\
mammography & 88.72 & 84.95 & 74.39 & 83.74 & 77.53 & 86.27 & 84.53 & 81.51 & 90.69 & 90.75 & N/A & 83.91 & 86.39 & 82.82 & 56.98 & 82.53 & 85.54 & 29.24 & 36.63 & 40.31 & 26.41 & 63.93 & 79.65 & 71.45 \\
mnist & 85.29 & 82.95 & 67.13 & 79.45 & 70.78 & 60.42 & 80.58 & 60.1 & 77.74 & 84.6 & 67.23 & 72.27 & 80.98 & 86.99 & 53.4 & 79.24 & 72.77 & 88.44 & 74.61 & 83.85 & 93.13 & 79.6 & 79.23 & 65.8 \\
musk & 100 & 80.58 & 41.18 & 100 & 38.69 & 100 & 69.89 & 74.09 & 94.2 & 95.11 & 76.85 & 95.11 & 99.99 & 100 & 43.52 & 99.96 & 99.15 & 99.36 & 97.74 & 100 & 99.99 & 100 & 91.53 & 97.82 \\
optdigits & 51.65 & 54 & 56.1 & 87.51 & 49.15 & 81.63 & 41.73 & 58.92 & 68.71 & 61.04 & 62.57 & 61.74 & 70.92 & 76.74 & 38.89 & 60.15 & 78.06 & 48.58 & 89.38 & 62.3 & 61.12 & 43.88 & 63.3 & 66.31 \\
PageBlocks & 90.64 & 88.76 & 75.9 & 85.04 & 72.65 & 80.58 & 81.94 & 77.75 & 88.05 & 90.92 & 89.61 & 83.34 & 89.57 & 89.37 & 57.77 & 90.34 & 71.31 & 88.46 & 89.41 & 73.24 & 93.4 & 88.23 & 84.02 & 91.23 \\
pendigits & 93.73 & 93.75 & 47.99 & 90.4 & 45.07 & 93.04 & 72.95 & 66.29 & 90.68 & 91.22 & 64.22 & 89.1 & 94.76 & 96.5 & 39.92 & 95.32 & 88.53 & 93.77 & 85.99 & 50.6 & 91.55 & 93.33 & 82.07 & 97.61 \\
Pima & 70.77 & 66.92 & 65.71 & 71.42 & 61.05 & 71.07 & 73.43 & 61.25 & 69.1 & 51.54 & 55.92 & 65.93 & 72.87 & 78.36 & 51.03 & 75.2 & 77.63 & 59.81 & 57.49 & 47.19 & 56.98 & 60.6 & 65.31 & 73.58 \\
satellite & 59.62 & 59.02 & 55.88 & 71.32 & 54.74 & 74.8 & 65.18 & 63.96 & 63.2 & 75.06 & 62.33 & 61.98 & 70.43 & 78.85 & 55.3 & 70.31 & 53.36 & 68.74 & 72.45 & 53.12 & 70.1 & 71.97 & 63.17 & 71.9 \\
satimage-2 & 97.62 & 97.35 & 47.36 & 99.84 & 56.7 & 97.65 & 92.6 & 83.08 & 97.21 & 97.11 & 96.29 & 97.56 & 99.16 & 99.87 & 53.14 & 99.84 & 98.36 & 98.23 & 95.92 & 66.8 & 94.27 & 99.26 & 98.19 & 99.37 \\
shuttle & 98.62 & 97.4 & 57.11 & 83.48 & 51.72 & 98.63 & 69.64 & 69.51 & 99.35 & 99.4 & 97.92 & 60.95 & 99.56 & 98.27 & 52.05 & 96.91 & 97.95 & 99.17 & 85.64 & 36.07 & 99.37 & 99.46 & 97.49 & 98.22 \\
skin & 45.26 & 49.45 & 46.47 & 69.49 & 41.66 & 60.15 & 71.46 & 60.35 & 47.55 & 39.09 & N/A & 45.75 & 68.21 & 78.89 & 44.05 & 77.98 & 69.83 & 56.4 & 43.8 & 42.75 & 81.03 & 63.85 & 70.21 & 75.55 \\
smtp & 88.41 & 80.7 & 71.84 & 79.68 & 79.6 & 70.52 & 89.62 & 59.85 & 79.09 & 71.86 & 71.32 & 67.43 & 89.73 & 78.89 & 78.24 & 80.74 & 63.14 & 100 & 100 & 50 & 100 & 99.93 & 100 & 94.31 \\
SpamBase & 54.66 & 52.47 & 43.33 & 54.97 & 40.96 & 64.74 & 53.35 & 52.35 & 70.09 & 66.89 & N/A & 41.99 & 64.76 & 58.7 & 53.55 & 56.04 & 55.83 & 19.86 & 35.64 & 42.83 & 29.76 & 45.24 & 58.39 & 73.52 \\
speech & 50.79 & 50.19 & 52.48 & 50.58 & 55.97 & 50.59 & 51.03 & 55.86 & 52.89 & 51.58 & 52.75 & 49.84 & 50.74 & 53.03 & 53.43 & 51.35 & 49.96 & 53.43 & 55.34 & 41.14 & 55.6 & 55.15 & 53.45 & 51.26 \\
Stamps & 91.47 & 83.86 & 51.26 & 68.18 & 53.81 & 90.73 & 68.61 & 73.26 & 93.4 & 91.41 & 88.88 & 87.18 & 91.21 & 91.18 & 55.84 & 86.7 & 75.13 & 87.52 & 75.23 & 46.97 & 91.5 & 86.84 & 72.49 & 89.73 \\
thyroid & 96.34 & 87.92 & 86.86 & 94.73 & 90.87 & 95.62 & 95.93 & 92.81 & 94.3 & 97.78 & 79.75 & 74.3 & 98.3 & 94.9 & 49.64 & 96.01 & 86.4 & 94.56 & 98.34 & 73.63 & 98.89 & 95.2 & 94.26 & 98.99 \\
vertebral & 37.06 & 37.99 & 49.29 & 41.41 & 48.71 & 28.56 & 33.79 & 40.32 & 25.64 & 37.51 & 53.2 & 30.57 & 36.66 & 35.48 & 36.67 & 35.34 & 41.39 & 50.57 & 56.62 & 45.19 & 52.37 & 45.38 & 52.51 & 32.22 \\
vowels & 65.29 & 61.59 & 93.12 & 89.92 & 94.04 & 72.21 & 97.26 & 92.65 & 53.15 & 45.81 & 60.58 & 70.36 & 73.94 & 93.37 & 52.49 & 88.53 & 80.45 & 53.78 & 85.5 & 75.89 & 90.16 & 91.41 & 89.51 & 80.5 \\
Waveform & 65.48 & 56.29 & 73.32 & 72.42 & 72.56 & 68.77 & 73.78 & 68.57 & 75.03 & 73.25 & 49.35 & 60.13 & 71.47 & 80.93 & 54.47 & 73.59 & 73.43 & 63.84 & 58.06 & 44.72 & 70.84 & 80.99 & 52.51 & 76.66 \\
WBC & 98.2 & 99.03 & 54.17 & 99.46 & 60.9 & 98.72 & 90.56 & 94.6 & 99.11 & 99.11 & N/A & 96.91 & 99.01 & 99.24 & 55.5 & 95.33 & 97.97 & 93.16 & 37.09 & 55.32 & 63.54 & 99.6 & 95.13 & 98.49 \\
WDBC & 99.05 & 98.86 & 89 & 99.32 & 96.26 & 99.5 & 91.72 & 91.9 & 99.42 & 97.2 & 76.67 & 98.26 & 98.95 & 98.34 & 65.69 & 96.57 & 96.72 & 97.71 & 78.16 & 85.32 & 87.27 & 99.37 & 93.32 & 99.06 \\
Wilt & 20.39 & 31.28 & 50.65 & 32.54 & 49.66 & 32.49 & 48.42 & 53.25 & 33.4 & 39.43 & 37.29 & 26.42 & 41.94 & 30.44 & 46.08 & 30.03 & 38.27 & 35.9 & 81.08 & 49.74 & 56.07 & 38.22 & 30.47 & 59.94 \\
wine & 84.37 & 73.07 & 37.74 & 25.86 & 44.44 & 91.36 & 44.98 & 46.11 & 88.65 & 71.34 & 61.7 & 90.12 & 80.37 & 78.55 & 59.52 & 70.11 & 82.72 & 68 & 42.07 & 58.06 & 75.44 & 84.6 & 48.54 & 79.01 \\
WPBC & 46.01 & 45.35 & 41.41 & 44.77 & 45.88 & 51.24 & 46.59 & 51.14 & 49.34 & 46.83 & 47.8 & 49.31 & 46.63 & 49.46 & 49.79 & 47.3 & 47.38 & 44.46 & 48.77 & 57.41 & 55.22 & 44.43 & 48.38 & 47.47 \\
yeast & 41.15 & 41 & 45.31 & 44.85 & 44.48 & 39.64 & 39.06 & 42.46 & 36.99 & 39.61 & 41.11 & 44.58 & 37.76 & 36.17 & 47.92 & 38.85 & 41.17 & 40.5 & 42.4 & 46.33 & 39.16 & 42.33 & 42.1 & 37.13 \\
\bottomrule
\end{tabular}
\end{adjustbox}
\label{tab:appendix_auroc}
\end{sidewaystable}

\begin{sidewaystable}[p]
\caption{Full benchmark AUPRC results over 47 ADBench tabular datasets.}
\centering
\tiny
\setlength{\tabcolsep}{2.2pt}
\renewcommand{\arraystretch}{1.05}
\begin{adjustbox}{max width=\linewidth}
\begin{tabular}{lrrrrrrrrrrrrrrrrrrrrrrrr}
\toprule
Datasets & PCA & OCSVM & LOF & CBLOF & COF & HBOS & $k$NN & SOD & COPOD & ECOD & DAGMM & LODA & IForest & ADERH & DeepSVDD & DIF & RCA & GOAD & NeuTraL & ICL & SLAD & REPEN & RDP & RGLD \\
\midrule
ALOI & 4.17 & 5.02 & 8.08 & 4.46 & 6.85 & 3.69 & 6.02 & 5.97 & 3.62 & 3.9 & 4.33 & 4.53 & 3.9 & 4.75 & 4.01 & 4.47 & 4.67 & 6.44 & 7.66 & 3.54 & 5.55 & 5.88 & 6.7 & 3.33 \\
annthyroid & 16.12 & 10.37 & 15.71 & 13.69 & 14.39 & 16.99 & 16.74 & 18.84 & 16.58 & 24.65 & 9.64 & 7.06 & 30.47 & 13.59 & 21.95 & 16.44 & 13.21 & 11.46 & 16.34 & 10.79 & 29.53 & 13.97 & 14.62 & 48.96 \\
backdoor & 31.29 & 9.69 & 26.14 & 6.96 & 24.68 & 4.91 & 45.22 & 39.41 & 7.69 & 11.25 & 6.5 & 14.51 & 4.75 & 20.78 & 12.85 & 25.5 & 9.49 & 10.52 & 48.79 & 32.69 & 44.28 & 10.57 & 19.88 & 5.17 \\
breastw & 95.11 & 82.7 & 28.55 & 91.54 & 27.6 & 97.71 & 92.19 & 84.88 & 99.4 & 98.54 & N/A & 97.04 & 96.04 & 98.07 & 50.92 & 88.97 & 97.5 & 20.21 & 34.8 & 59.54 & 30.65 & 97.14 & 92.24 & 97.25 \\
campaign & 27.9 & 29.22 & 14.51 & 23.99 & 13.01 & 37.99 & 27.18 & 18.88 & 38.58 & 37.4 & 14.62 & 13.47 & 32.26 & 27.76 & 11.6 & 22.85 & 22.35 & 32.77 & 30.35 & 26.86 & 30.82 & 26.71 & 26.52 & 35.45 \\
cardio & 66.06 & 62.89 & 23.79 & 61.95 & 28.67 & 52.1 & 40.72 & 28.54 & 60.42 & 68.59 & 28.92 & 53.41 & 59.95 & 64.25 & 22.5 & 55.18 & 53.4 & 55.86 & 39.41 & 22.21 & 39.89 & 47.28 & 41.17 & 45.45 \\
Cardiotocography & 47.95 & 52.61 & 30.66 & 45.44 & 28.21 & 38.28 & 34.79 & 27.99 & 40.46 & 43.57 & 30.61 & 48 & 41.47 & 44.56 & 34.03 & 42.79 & 44.52 & 46.61 & 37.63 & 31.8 & 43.25 & 40.28 & 38.7 & 40.02 \\
celeba & 15.89 & 10.73 & 1.71 & 11.33 & 1.77 & 13.82 & 3.14 & 2.66 & 13.69 & 12.37 & 1.95 & 4.04 & 8.96 & 7.14 & 2.34 & 9.08 & 5.05 & 3.67 & 4.37 & 2.61 & 5.07 & 5.71 & 3.88 & 13.07 \\
census & 10.02 & 6.76 & 5.45 & 7.44 & 4.88 & 8.69 & 9 & 8.52 & 9.92 & 9.72 & 8.71 & 5.01 & 7.78 & 8.32 & 6.87 & 8.04 & 8.17 & 7.46 & 10.09 & 10.8 & 8.23 & 6.93 & 10.33 & 8.31 \\
cover & 9.8 & 11.41 & 8.12 & 5.83 & 4 & 6.83 & 6.16 & 3.88 & 11.37 & 15.63 & 27.59 & 13.06 & 8.85 & 6.15 & 8.12 & 10 & 3.68 & 12.62 & 12.49 & 1.88 & 12.46 & 5.89 & 20.73 & 7.34 \\
donors & 17.9 & 9.86 & 7.88 & 6.89 & 8.8 & 23.36 & 14.75 & 9.69 & 21.58 & 14.17 & 10.53 & 3.78 & 12.74 & 11.48 & 6.38 & 10.09 & 17.06 & 5.2 & 6.07 & 6.77 & 5.63 & 10.14 & 11.43 & 25.2 \\
fault & 32.76 & 38.44 & 38.38 & 43.98 & 41.56 & 36.47 & 54.45 & 48.01 & 30.54 & 30.82 & 33.48 & 31.03 & 41.09 & 54.21 & 39.15 & 51.61 & 45.49 & 46.74 & 54.42 & 50.77 & 54.91 & 52.22 & 54.07 & 33.09 \\
fraud & 22.91 & 47.58 & 47.4 & 47.52 & 22.86 & 25.89 & 47.3 & 31.37 & 42.82 & 42.99 & 21.32 & 46.37 & 21.67 & 26.29 & 8.97 & 38.55 & 32.38 & 60.49 & 23.17 & 41.52 & 57.74 & 54.75 & 42.3 & 26.53 \\
glass & 10.05 & 8.02 & 20.11 & 13.84 & 11.81 & 11.82 & 20.26 & 18.73 & 9.78 & 18.43 & 24.58 & 13.37 & 10.99 & 15.05 & 8.72 & 15.1 & 6.86 & 26.55 & 14.23 & 10.6 & 20.65 & 15.57 & 22.29 & 11.48 \\
Hepatitis & 36.65 & 29.44 & 13.69 & 31.54 & 14.39 & 37.73 & 21.95 & 24.89 & 41.5 & 37.82 & 22.96 & 30.9 & 26.25 & 28.45 & 22.17 & 24.87 & 18.02 & 28.72 & 26.01 & 12.16 & 24.03 & 15.98 & 31.75 & 31.68 \\
http & 56.43 & 46.86 & 3.82 & 47.53 & 9.57 & 44.79 & 0.7 & 8.32 & 35.19 & 16.61 & N/A & 0.67 & 90.83 & 43.52 & 29.3 & 48.87 & 26.83 & 31.33 & 26.12 & 0.5 & 61.61 & 33.44 & 21.65 & 95.69 \\
InternetAds & 32.55 & 54.68 & 40.49 & 58.13 & 38.67 & 53.97 & 43.23 & 27.69 & 50.97 & 51.07 & N/A & 23.89 & 48.6 & 49.94 & 27.91 & 52.12 & 57.87 & 16.13 & 20.55 & 20.77 & 22.93 & 47.15 & 34.97 & 55.38 \\
Ionosphere & 73.92 & 74.54 & 88.06 & 90.27 & 82.91 & 41.78 & 90.41 & 85.87 & 69.89 & 65.99 & 64.98 & 73.04 & 80.41 & 88.2 & 41.79 & 88.76 & 75.01 & 76.13 & 92.31 & 35.6 & 92.42 & 69.43 & 90.26 & 85.49 \\
landsat & 16.18 & 16.21 & 24.69 & 30.97 & 24.95 & 22.03 & 24.65 & 26.38 & 17.48 & 25.17 & 24.48 & 18.86 & 19.81 & 22.99 & 38.83 & 20.58 & 13.98 & 27.02 & 28.09 & 38.58 & 52.35 & 22.94 & 20.03 & 21 \\
letter & 6.86 & 6.1 & 34.02 & 14.8 & 21.43 & 8.38 & 30 & 28.63 & 6.77 & 6.94 & 11.68 & 6.87 & 8.49 & 12.76 & 9.29 & 12.03 & 8.27 & 12.87 & 32.23 & 19.36 & 20.97 & 12 & 15.87 & 10.72 \\
Lymphography & 97.02 & 93.59 & 23.08 & 97.62 & 36.68 & 91.83 & 38.69 & 22.65 & 88.68 & 90.87 & 19.52 & 44.54 & 97.31 & 98.08 & 4.58 & 95.24 & 97.19 & 54.72 & 11.02 & 11.9 & 10.14 & 100 & 19.25 & 83.24 \\
magic.gamma & 59.27 & 51.43 & 54.76 & 68.85 & 54.12 & 62.41 & 75.63 & 67.89 & 59.18 & 54.38 & 46.92 & 58.49 & 64.72 & 73.51 & 49.17 & 69.5 & 64.8 & 59.17 & 56.15 & 52.79 & 65.01 & 63.78 & 68.15 & 61.97 \\
mammography & 19.25 & 12.94 & 9.8 & 11.14 & 11.14 & 21.31 & 15.91 & 13.41 & 40.67 & 41.28 & N/A & 14.75 & 20.67 & 10.38 & 6.26 & 11.76 & 18.8 & 1.52 & 1.67 & 3.46 & 1.45 & 3.41 & 10.99 & 5.75 \\
mnist & 39.93 & 33.2 & 20.9 & 28.82 & 25.51 & 12.51 & 35.53 & 19.15 & 21.35 & 31.93 & 23.75 & 25.86 & 27.71 & 37.91 & 19.72 & 24.59 & 27.13 & 46.2 & 33.04 & 34.42 & 55.84 & 37.68 & 31.54 & 14.37 \\
musk & 99.89 & 10.61 & 2.82 & 100 & 2.61 & 100 & 9.65 & 7.59 & 34.79 & 34.95 & 32.75 & 47.6 & 99.61 & 100 & 5.39 & 98.21 & 85.36 & 88.36 & 61.07 & 100 & 99.64 & 100 & 37.68 & 65.16 \\
optdigits & 2.76 & 2.92 & 6.06 & 10.08 & 4.42 & 10.03 & 3.06 & 4.39 & 4.36 & 3.43 & 5.59 & 3.95 & 5.09 & 5.88 & 2.5 & 3.65 & 7.33 & 2.61 & 12.05 & 4.44 & 3.67 & 2.42 & 4.08 & 4.46 \\
PageBlocks & 51.71 & 49.14 & 39.64 & 49.65 & 41.02 & 33.32 & 45.39 & 37.83 & 37.65 & 49.3 & 53.25 & 51.29 & 46.04 & 53.37 & 31.45 & 52.47 & 47.12 & 49.78 & 52.38 & 29.02 & 58.44 & 53.89 & 47.98 & 51.29 \\
pendigits & 23.65 & 23.52 & 3.78 & 17.27 & 2.89 & 29.27 & 6.5 & 4.46 & 21.22 & 23.07 & 4.67 & 18.71 & 26.05 & 30.59 & 2.45 & 23.91 & 10.4 & 33.76 & 8.69 & 2.87 & 12.81 & 36.69 & 7.69 & 49.47 \\
Pima & 54.03 & 50 & 47.18 & 53.19 & 44.7 & 56.61 & 55.14 & 48.24 & 55.19 & 37.3 & 41.55 & 44.09 & 55.82 & 59.35 & 35.87 & 56.77 & 59.54 & 47.9 & 45.89 & 37.57 & 46.28 & 52.38 & 50.86 & 57.54 \\
satellite & 59.64 & 57.61 & 37.68 & 61.48 & 39.7 & 67.25 & 50.01 & 47.23 & 56.58 & 65.94 & 58.33 & 61.94 & 65.92 & 72.26 & 40.11 & 59.94 & 43.59 & 69.61 & 60.96 & 45.59 & 63.46 & 68.37 & 50.76 & 68.61 \\
satimage-2 & 85.69 & 82.71 & 4.29 & 97.09 & 8.8 & 78.04 & 39.14 & 26.11 & 76.55 & 63.25 & 22.07 & 80.52 & 93.45 & 97.47 & 3.08 & 94.61 & 95.25 & 74.65 & 19.49 & 4.36 & 10.09 & 91.28 & 41.82 & 92.93 \\
shuttle & 92.35 & 85.29 & 13.76 & 60.98 & 12.17 & 96.4 & 20.38 & 20.27 & 96.56 & 95.76 & 93.2 & 48.75 & 97.62 & 89.31 & 15.86 & 65.68 & 95.11 & 95.93 & 43.82 & 8.41 & 97.95 & 97.95 & 84.38 & 89.06 \\
skin & 17.4 & 19.03 & 18.25 & 29.82 & 16.38 & 23.7 & 28.72 & 24.61 & 17.99 & 15.96 & N/A & 18.44 & 26.08 & 34.65 & 18.48 & 33.95 & 30.02 & 21.31 & 20.28 & 20.71 & 36.84 & 23.83 & 27.55 & 31.55 \\
smtp & 66.7 & 18.9 & 20.69 & 26.68 & 61.13 & 35.2 & 66.7 & 33.36 & 1.08 & 50.01 & 50.03 & 35.77 & 1.24 & 50.02 & 50.02 & 50.02 & 18.52 & 100 & 100 & 0.03 & 100 & 33.33 & 100 & 4.16 \\
SpamBase & 41.57 & 40.12 & 35.16 & 41.18 & 34.73 & 50.03 & 41.42 & 40.03 & 56.68 & 53.95 & N/A & 35.88 & 51.75 & 43.15 & 42.23 & 42.07 & 41.97 & 26.8 & 32.26 & 34.76 & 30.05 & 37.03 & 43.39 & 65.1 \\
speech & 1.97 & 1.96 & 2.52 & 1.99 & 2.25 & 2.09 & 2.02 & 2.13 & 1.94 & 1.77 & 2.03 & 1.79 & 2.31 & 2.02 & 5.12 & 2.33 & 3.05 & 2.63 & 2.13 & 1.39 & 1.9 & 2.2 & 3.33 & 1.99 \\
Stamps & 41.09 & 31.39 & 21.29 & 23.66 & 16.5 & 35.24 & 23.53 & 20.28 & 43.1 & 38.17 & 43.72 & 34.6 & 39.49 & 39.56 & 11.4 & 33.23 & 20.44 & 26.24 & 16.61 & 12.04 & 33.57 & 24.48 & 15.34 & 32.87 \\
thyroid & 44.34 & 21.23 & 20.81 & 29.95 & 28.5 & 50.98 & 34.98 & 23.56 & 19.64 & 54.05 & 16.06 & 14.68 & 63.11 & 31.12 & 2.5 & 41.26 & 26.72 & 36.09 & 57.2 & 46.1 & 82.81 & 31.63 & 38.54 & 67.46 \\
vertebral & 10.49 & 10.94 & 14.24 & 11.58 & 13.85 & 9.23 & 10.57 & 11.79 & 8.89 & 11.24 & 15.24 & 9.68 & 10.46 & 10.28 & 10.49 & 10.29 & 10.54 & 11.44 & 14.62 & 11.99 & 13.7 & 9.6 & 12.72 & 9.34 \\
vowels & 8.92 & 8.24 & 34.42 & 22.12 & 55.96 & 13.41 & 63.41 & 38.88 & 4.14 & 3.92 & 12.22 & 13.82 & 15.12 & 43.18 & 4.99 & 33.72 & 20.37 & 11.09 & 14.72 & 16.98 & 28.95 & 22.6 & 20.61 & 13.41 \\
Waveform & 5.79 & 4.37 & 11.33 & 18.98 & 14.11 & 5.86 & 13.04 & 9.66 & 6.9 & 6.86 & 3.11 & 4.71 & 6.24 & 14.35 & 4.83 & 7.83 & 8.44 & 4.03 & 4.22 & 2.9 & 38.45 & 22.03 & 3.27 & 10.29 \\
WBC & 82.29 & 89.87 & 5.57 & 92.27 & 9.73 & 73.56 & 66.55 & 54 & 86.19 & 86.19 & N/A & 78.67 & 90.49 & 90.54 & 6.38 & 47.61 & 77.4 & 40.24 & 4.09 & 7.74 & 6.67 & 94.04 & 32.6 & 80.44 \\
WDBC & 75.46 & 71.88 & 14.93 & 79.62 & 50.52 & 88.98 & 43.72 & 35.6 & 84.78 & 57.91 & 18.48 & 66.11 & 78.53 & 59.26 & 6.57 & 47.97 & 44.04 & 48.58 & 5.96 & 21.61 & 12.64 & 76.26 & 18.51 & 71.47 \\
Wilt & 3.13 & 3.62 & 5.05 & 3.64 & 4.98 & 3.84 & 4.73 & 5.53 & 3.69 & 4.14 & 4 & 3.36 & 4.23 & 3.53 & 4.67 & 3.51 & 3.99 & 3.95 & 12.84 & 5.33 & 5.66 & 3.99 & 3.52 & 6.04 \\
wine & 30.87 & 21.56 & 7.77 & 5.83 & 8.45 & 43.08 & 8.43 & 7.95 & 45.71 & 18.37 & 17.51 & 48.82 & 25.96 & 17.94 & 21.14 & 14.17 & 23.73 & 13.7 & 6.78 & 10.84 & 14.27 & 24.67 & 7.23 & 22.41 \\
WPBC & 23.01 & 22.93 & 20.29 & 21.32 & 21.3 & 23.04 & 21.49 & 25.37 & 22.81 & 21.38 & 22.49 & 25.58 & 22.42 & 22.88 & 26.24 & 22.4 & 23.77 & 20.79 & 25.64 & 34.09 & 32.75 & 20.83 & 22.2 & 21.64 \\
yeast & 29.9 & 29.84 & 31.64 & 30.93 & 31.27 & 32.75 & 29.33 & 29.96 & 30.71 & 31.36 & 29.92 & 33.29 & 29.8 & 27.81 & 33.03 & 28.5 & 29.63 & 32.73 & 32.97 & 32.02 & 29.94 & 31.32 & 33.34 & 30.88 \\
\bottomrule
\end{tabular}
\end{adjustbox}
\label{tab:appendix_auprc}
\end{sidewaystable}

\begin{sidewaystable}[p]
\caption{Full benchmark Runtime (s) results over 47 ADBench tabular datasets.}
\centering
\tiny
\setlength{\tabcolsep}{2.2pt}
\renewcommand{\arraystretch}{1.05}
\begin{adjustbox}{max width=\linewidth}
\begin{tabular}{lrrrrrrrrrrrrrrrrrrrrrrrr}
\toprule
Datasets & PCA & OCSVM & LOF & CBLOF & COF & HBOS & $k$NN & SOD & COPOD & ECOD & DAGMM & LODA & IForest & ADERH & DeepSVDD & DIF & RCA & GOAD & NeuTraL & ICL & SLAD & REPEN & RDP & RGLD \\
\midrule
ALOI & 0.23 & 3.86 & 1.65 & 0.83 & 27.69 & 0.94 & 1.62 & 6.05 & 0.1 & 0.13 & 31.17 & 0.06 & 0.39 & 1.56 & 20.55 & 34.42 & 95.16 & 159.27 & 137.83 & 158.46 & 37.88 & 100.33 & 43.15 & 0.29 \\
annthyroid & 0.01 & 1.18 & 0.17 & 0.39 & 5.13 & 0 & 0.19 & 2.25 & 0.02 & 0.03 & 17.89 & 0.05 & 0.27 & 1.23 & 12.82 & 20.38 & 63.47 & 110.87 & 84.46 & 268.3 & 25.93 & 67.19 & 36.41 & 0.21 \\
backdoor & 0.73 & 6.43 & 1.85 & 0.14 & 121.26 & 0.06 & 3.51 & 5.15 & N/A & N/A & 24.8 & 0.08 & 2.02 & 2.15 & 19.25 & 33.8 & 63.06 & 162.97 & 118.98 & 60.04 & 40.91 & 100.84 & 51.66 & 0.5 \\
breastw & 0 & 0.03 & 0.01 & 0.12 & 0.15 & 0.01 & 0.02 & 0.13 & 0.01 & 0 & N/A & 0.02 & 0.12 & 0.2 & 1.79 & 2.42 & 8.93 & 15.78 & 11.05 & 38.06 & 48.75 & 7.63 & 7.08 & 0.06 \\
campaign & 0.15 & 4.38 & 1.71 & 0.29 & 45.75 & 0.02 & 2.3 & 4.85 & 0.13 & 0.19 & 18.41 & 0.06 & 0.79 & 1.38 & 24.39 & 33.6 & 82.43 & 155.97 & 118.96 & 118.81 & 52.13 & 101.46 & 78.8 & 0.37 \\
cardio & 0.02 & 0.1 & 0.07 & 0.23 & 0.38 & 0 & 0.07 & 0.28 & 0.02 & 0.02 & 3.4 & 0.04 & 0.16 & 0.4 & 4.98 & 6.46 & 17.53 & 28.82 & 25.31 & 39.18 & 22.68 & 14.72 & 9.63 & 0.09 \\
Cardiotocography & 0.03 & 0.15 & 0.08 & 0.24 & 0.51 & 0 & 0.1 & 0.38 & 0.02 & 0.02 & 3.42 & 0.04 & 0.17 & 0.42 & 4.49 & 9.1 & 17.24 & 32.9 & 28.43 & 45.55 & 17.88 & 16.16 & 11.19 & 0.09 \\
celeba & 0.1 & 3.65 & 1.52 & 0.29 & 27.96 & 0.02 & 1.99 & 4.8 & 0.07 & 0.1 & 19.79 & 0.07 & 0.65 & 1.45 & 23.77 & 35.23 & 77.89 & 160.95 & 127.82 & 162.91 & 35.39 & 97.78 & 69.31 & 0.32 \\
census & 4.52 & 18.94 & 1.99 & 0.65 & 316.96 & 0.17 & 15.79 & 5.65 & N/A & N/A & 33.12 & 0.1 & 5.7 & 3.22 & 25.35 & 35.22 & 72.15 & 167.27 & 127.84 & 172.04 & 46.93 & 104.73 & 86.05 & 1.15 \\
cover & 0.04 & 3.21 & 0.37 & 0.34 & 12.37 & 0.01 & 0.38 & 4.04 & 0.05 & 0.07 & 32.81 & 0.06 & 0.38 & 1.58 & 22.42 & 31.74 & 89.71 & 154.97 & 140.79 & 332.33 & 43.72 & 97.32 & 67.16 & 0.28 \\
donors & 0.04 & 2.96 & 0.29 & 0.14 & 11.03 & 0 & 0.33 & 3.74 & 0.02 & 0.04 & 34.5 & 0.06 & 0.4 & 1.49 & 21.07 & 28.95 & 88.05 & 157.72 & 126.86 & 302.71 & 37.32 & 99.51 & 72.77 & 0.24 \\
fault & 0.06 & 0.17 & 0.06 & 0.26 & 0.59 & 0.01 & 0.08 & 0.33 & 0.02 & 0.04 & 6.49 & 0.04 & 0.18 & 0.4 & 4.57 & 6.97 & 12.59 & 30.13 & 28.15 & 34.38 & 21.32 & 16.45 & 17.83 & 0.09 \\
fraud & 0.22 & 4.34 & 1.75 & 0.42 & 30.76 & 0.01 & 1.69 & 4.81 & 0.15 & 0.25 & 34.85 & 0.06 & 0.46 & 1.53 & 24.06 & 33.54 & 82.52 & 155.65 & 129.69 & 173.66 & 53.94 & 99.55 & 82.73 & 0.32 \\
glass & 0 & 0.04 & 0.01 & 0.2 & 0.17 & 0 & 0.01 & 0.12 & 0.02 & 0 & 2.47 & 0.01 & 0.13 & 0.2 & 2.45 & 2.63 & 8.54 & 15.38 & 11.36 & 40.52 & 30.88 & 7.03 & 8.35 & 0.06 \\
Hepatitis & 0.03 & 0.06 & 0.03 & 0.2 & 0.18 & 0.01 & 0.04 & 0.13 & 0 & 0.01 & 2.63 & 0.02 & 0.14 & 0.25 & 2.31 & 2.66 & 9.18 & 15.69 & 13.79 & 20.85 & 30.08 & 7.49 & 8.39 & 0.06 \\
http & 0 & 2.93 & 0.11 & 0.19 & 11.14 & 0 & 0.16 & 3.93 & 0.02 & 0.02 & N/A & 0.05 & 0.35 & 1.56 & 22.28 & 28.06 & 91.43 & 157.66 & 100.9 & 453.11 & 35.85 & 100.82 & 66.36 & 0.26 \\
InternetAds & 15.68 & 1.86 & 0.14 & 0.46 & 40.37 & 0.2 & 1.34 & 0.75 & 0.56 & 0.79 & N/A & 0.07 & 2.91 & 1.41 & 4.66 & 6.83 & 13.49 & 36.62 & 19.31 & 44.88 & 27.59 & 17.66 & 14.24 & 0.61 \\
Ionosphere & 0.08 & 0.04 & 0.03 & 0.19 & 0.23 & 0 & 0.04 & 0.14 & 0.02 & 0.02 & 2.6 & 0.03 & 0.15 & 0.31 & 1.64 & 2.54 & 6.21 & 16.4 & 9.95 & 19.09 & 39.06 & 6.89 & 6.56 & 0.07 \\
landsat & 0.28 & 1.57 & 0.69 & 0.4 & 15.3 & 0.01 & 0.73 & 2.2 & 0.1 & 0.16 & 21.95 & 0.05 & 0.38 & 1.13 & 11.72 & 19.31 & 40.87 & 101.35 & 60.07 & 111.3 & 56.6 & 59.29 & 49.12 & 0.25 \\
letter & 0.13 & 0.08 & 0.04 & 0.2 & 0.49 & 0 & 0.07 & 0.25 & 0.02 & 0.03 & 5.21 & 0.05 & 0.16 & 0.37 & 3.21 & 6.66 & 15.51 & 25.22 & 17.78 & 28.14 & 18.55 & 12.48 & 13.57 & 0.1 \\
Lymphography & 0.03 & 0.04 & 0.01 & 0.17 & 0.19 & 0 & 0.03 & 0.13 & 0.01 & 0.01 & 2.61 & 0.02 & 0.14 & 0.26 & 3.01 & 3.05 & 6.59 & 16.28 & 11.54 & 26.35 & 35.15 & 7.07 & 7.15 & 0.07 \\
magic.gamma & 0.04 & 3.25 & 0.23 & 0.39 & 12 & 0 & 0.35 & 6.42 & 0.04 & 0.09 & 32.91 & 0.07 & 0.37 & 1.58 & 22.45 & 28.68 & 72.77 & 158.06 & 79.75 & 313.71 & 35.67 & 98.46 & 71.75 & 0.3 \\
mammography & 0.01 & 2.96 & 0.19 & 0.25 & 10.54 & 0.01 & 0.3 & 9.15 & 0.01 & 0.04 & N/A & 0.06 & 0.31 & 1.89 & 24.58 & 22.05 & 75.61 & 156.66 & 82.74 & 356.13 & 36.61 & 101.65 & 42.04 & 0.26 \\
mnist & 0.8 & 2.76 & 1.08 & 0.6 & 45.1 & 0.04 & 1.53 & 5.11 & N/A & N/A & 26.33 & 0.06 & 0.86 & 1.88 & 18.45 & 23.18 & 49.98 & 119.6 & 86.85 & 42.63 & 41.35 & 70.54 & 42.47 & 0.34 \\
musk & 2 & 0.59 & 0.18 & 0.28 & 11.97 & 0.05 & 0.4 & 1.29 & 0.24 & 0.38 & 12.16 & 0.05 & 0.55 & 0.81 & 7.42 & 11.06 & 24.35 & 48.8 & 29.44 & 18.96 & 33.18 & 26.78 & 13.22 & 0.25 \\
optdigits & 0.56 & 1.12 & 0.48 & 0.34 & 12.85 & 0.04 & 0.76 & 3.09 & N/A & N/A & 17.09 & 0.05 & 0.41 & 1 & 13.01 & 20.67 & 45.81 & 82.42 & 49.19 & 56.27 & 63.18 & 48 & 24.04 & 0.25 \\
PageBlocks & 0.01 & 0.86 & 0.1 & 0.32 & 3.61 & 0.01 & 0.14 & 3 & 0.02 & 0.04 & 18.06 & 0.05 & 0.22 & 1.14 & 12.55 & 17.38 & 43.56 & 84.95 & 47.57 & 170.22 & 45.68 & 48.41 & 21.55 & 0.18 \\
pendigits & 0.09 & 1.87 & 0.76 & 0.36 & 6.22 & 0.01 & 0.86 & 4.86 & 0.04 & 0.07 & 22.57 & 0.06 & 0.33 & 1.5 & 11.58 & 23.89 & 55.66 & 107.62 & 57.65 & 180.92 & 65.4 & 65.48 & 28.44 & 0.21 \\
Pima & 0 & 0.04 & 0.01 & 0.17 & 0.18 & 0.01 & 0.02 & 0.2 & 0 & 0.01 & 2.54 & 0.02 & 0.13 & 0.25 & 1.82 & 2.83 & 8.54 & 15.73 & 10.12 & 35.39 & 48.23 & 7.54 & 4.05 & 0.08 \\
satellite & 0.19 & 1.57 & 0.73 & 0.47 & 14.12 & 0.02 & 0.79 & 4.69 & 0.09 & 0.17 & 14.59 & 0.07 & 0.38 & 1.59 & 14.85 & 19.47 & 41.83 & 101.04 & 60.93 & 101.16 & 73.17 & 55.4 & 26.49 & 0.25 \\
satimage-2 & 0.22 & 1.37 & 0.53 & 0.38 & 10.45 & 0.01 & 0.63 & 3.76 & 0.09 & 0.15 & 10.13 & 0.05 & 0.35 & 1.56 & 12.73 & 15.79 & 38.08 & 90.22 & 58.17 & 109.39 & 49.93 & 52.77 & 23.59 & 0.23 \\
shuttle & 0.03 & 3.16 & 0.17 & 0.22 & 11.07 & 0.01 & 0.25 & 8.75 & 0.03 & 0.05 & 18 & 0.06 & 0.36 & 2.19 & 21.56 & 31.35 & 68.32 & 157.14 & 86.32 & 319.05 & 40.19 & 100.2 & 43.96 & 0.28 \\
skin & 0 & 2.52 & 0.05 & 0.2 & 9.74 & 0 & 0.17 & 5.95 & 0.02 & 0.02 & N/A & 0.05 & 0.36 & 2.13 & 22.15 & 30.12 & 77.24 & 157.23 & 97.78 & 447.26 & 54.79 & 97.65 & 48.23 & 0.25 \\
smtp & 0 & 2.75 & 0.09 & 0.22 & 10.09 & 0 & 0.16 & 6.19 & 0.01 & 0.02 & 17.43 & 0.04 & 0.32 & 1.98 & 23.1 & 26.45 & 72.19 & 157.84 & 94.37 & 452.46 & 53.7 & 100.14 & 47.7 & 0.26 \\
SpamBase & 0.34 & 0.76 & 0.31 & 0.31 & 6.33 & 0.02 & 0.46 & 2.36 & 0.06 & 0.09 & N/A & 0.05 & 0.29 & 0.98 & 8.2 & 13.41 & 35.16 & 66.97 & 33.38 & 43.95 & 49.75 & 37.78 & 19.38 & 0.21 \\
speech & 3.98 & 1.43 & 0.26 & 0.56 & 30.86 & 0.15 & 1.39 & 2.19 & 0.71 & 0.91 & 8.3 & 0.06 & 1.5 & 1.1 & 7.38 & 13.35 & 33.69 & 60.37 & 38.33 & 26.18 & 36.01 & 29.67 & 19.96 & 0.42 \\
Stamps & 0 & 0.04 & 0.02 & 0.21 & 0.17 & 0 & 0.03 & 0.22 & 0 & 0 & 1.32 & 0.02 & 0.13 & 0.35 & 2.16 & 3.22 & 9.14 & 16.08 & 11.81 & 36 & 31.08 & 6.64 & 4.47 & 0.06 \\
thyroid & 0.01 & 0.42 & 0.07 & 0.35 & 1.58 & 0 & 0.1 & 1.31 & 0.02 & 0.02 & 6.5 & 0.04 & 0.19 & 0.81 & 8.36 & 12.36 & 33.78 & 59.73 & 35.09 & 161.78 & 33.33 & 29.38 & 15.38 & 0.14 \\
vertebral & 0 & 0.04 & 0.01 & 0.21 & 0.15 & 0.01 & 0.02 & 0.21 & 0 & 0.01 & 1.34 & 0.01 & 0.13 & 0.33 & 2.13 & 2.44 & 8.08 & 15.42 & 8.57 & 44.49 & 48.32 & 6.76 & 4.05 & 0.06 \\
vowels & 0.04 & 0.07 & 0.04 & 0.27 & 0.31 & 0.01 & 0.05 & 0.38 & 0.01 & 0.02 & 2 & 0.02 & 0.15 & 0.47 & 3.57 & 5.04 & 11.3 & 23.11 & 13.17 & 54.84 & 18.23 & 11.6 & 6.42 & 0.08 \\
Waveform & 0.16 & 0.42 & 0.21 & 0.39 & 1.57 & 0 & 0.24 & 1.59 & 0.03 & 0.06 & 5.98 & 0.05 & 0.22 & 0.94 & 7.18 & 12.31 & 29.66 & 54.34 & 32.54 & 80.1 & 34.76 & 30.02 & 14.91 & 0.16 \\
WBC & 0 & 0.02 & 0.02 & 0.16 & 0.17 & 0.01 & 0.02 & 0.24 & 0.01 & 0 & N/A & 0.02 & 0.13 & 0.3 & 2.14 & 2.54 & 8.2 & 15.17 & 8.61 & 35.47 & 41.99 & 7.66 & 4.59 & 0.06 \\
WDBC & 0.04 & 0.04 & 0.03 & 0.17 & 0.19 & 0.01 & 0.04 & 0.26 & 0.02 & 0.02 & 1.33 & 0.03 & 0.14 & 0.44 & 1.65 & 2.58 & 7.85 & 15.62 & 8.27 & 15.86 & 30.51 & 7.65 & 4.5 & 0.07 \\
Wilt & 0 & 0.56 & 0.06 & 0.36 & 2.59 & 0 & 0.09 & 2.01 & 0.01 & 0.02 & 8.21 & 0.03 & 0.21 & 0.94 & 10.22 & 14.01 & 34.23 & 76.74 & 38.37 & 201.36 & 43.17 & 40.04 & 22.43 & 0.17 \\
wine & 0.01 & 0.04 & 0.03 & 0.15 & 0.18 & 0.01 & 0.03 & 0.25 & 0.01 & 0.01 & 1.32 & 0.01 & 0.13 & 0.36 & 2.37 & 2.1 & 6.66 & 16.78 & 7.85 & 34.21 & 36.71 & 7.95 & 5.87 & 0.06 \\
WPBC & 0.06 & 0.04 & 0.03 & 0.22 & 0.18 & 0 & 0.05 & 0.26 & 0.01 & 0.03 & 1.34 & 0.03 & 0.14 & 0.45 & 2.47 & 2.26 & 6.23 & 15.73 & 7.99 & 15.8 & 48.7 & 7.8 & 5.99 & 0.08 \\
yeast & 0 & 0.06 & 0.03 & 0.22 & 0.27 & 0 & 0.04 & 0.42 & 0.01 & 0 & 2.58 & 0.02 & 0.14 & 0.45 & 2.73 & 4.38 & 12.32 & 23.11 & 12.16 & 56.32 & 17.26 & 12.13 & 9.16 & 0.07 \\
\bottomrule
\end{tabular}
\end{adjustbox}
\label{tab:appendix_runtime}
\end{sidewaystable}

\section{Reproducibility / Code Availability}

\label{app:code}

We value the availability and reproducibility of our work. The code and all the hyperparameters used in the experiment section are supplied as part of the supplemental material. We will also make our code publicly available upon acceptance of the paper.

\end{document}